\def\GP{{\mathcal {GP}}}
\def\cN{{\mathcal {N}}}
\def\dt{{\text dt}}
\def\t{{\mathbf t}}
\def\f{{\mathbf f}}
\def\x{{\mathbf x}}
\def\v{{\mathbf v}}
\def\eye{{\mathbf I}}
\def\dtau{{\text d\tau}}
\def\R{{\mathbb R}}
\def\y{{\mathbf y}}
\DeclareMathOperator\supp{supp}
\newcommand{\fourier}[1]{\mathcal{F} \left\{#1\right\}}
\newcommand{\invfourier}[1]{\mathcal{F}^{-1} \left\{#1\right\}}
\newcommand{\E}[1]{\mathbb E \left[#1\right]}
\newcommand{\MVN}[1]{\text{MVN} \left(#1\right)}
\newcommand{\V}[1]{\mathbb V \left[#1\right]}
\newtheorem{definition}{Definition}
\newtheorem{theorem}{Theorem}
\newtheorem{remark}{Remark}
\newtheorem{lemma}{Lemma}
\newtheorem*{lemma-non}{Lemma}
\newtheorem{example}{Example}
\newcommand{\red}[1]{{ #1}}
\newcommand{\blue}[1]{{\color{blue} #1}}
\begin{document}

\title{Gaussian Process Deconvolution}

\author{
Felipe Tobar$^{1}$, Arnaud Robert$^{2}$ and Jorge F. Silva$^{3}$}

\address{
$^{1}$Initiative for Data \& Artificial Intelligence, Universidad de Chile\\
$^{2}$Department of Computing, Imperial College London\\
$^{3}$Department of Electrical and Electronic Engineering, Universidad de Chile\\
}

\subject{machine learning, signal processing}

\keywords{Gaussian processes, deconvolution, Bayesian inference}

\corres{Felipe Tobar\\
\email{ftobar@uchile.cl}}

\begin{abstract}
Let us consider the deconvolution problem, that is, to recover a latent source $x(\cdot)$ from the observations $\y = [y_1,\ldots,y_N]$ of a convolution process $y = x\star h + \eta$, where $\eta$ is an additive noise, \red{the observations in $\y$ might have missing parts with respect to $y$}, and the filter $h$ could be unknown. We propose a novel strategy to address this task when $x$ is a continuous-time signal: we adopt a Gaussian process (GP) prior on the source $x$, which allows for closed-form Bayesian nonparametric deconvolution. We first analyse the direct model to establish the conditions under which the model is well defined. Then, we turn to the inverse problem, where we study i) some necessary conditions under which Bayesian deconvolution is feasible, and ii) to which extent the filter $h$ can be learnt from data or approximated for the blind deconvolution case. The proposed approach, termed Gaussian process deconvolution (GPDC) is compared to other deconvolution methods conceptually, via illustrative examples, and using real-world datasets.

\end{abstract}


\begin{fmtext}

\textbf{This is the author generated postprint of the accepted manuscript (i.e., the accepted version not typeset by the journal) produced to be shared in personal or public repositories.}

\end{fmtext}


\maketitle

\section{Introduction} 
\label{sec:intro}

In signal processing, the convolution between a (continuous-time) source $x$ and a filter $h$, denoted by\footnote{For a lighter notation, we use the compact expressions $x,h$ and $f$ to represent the functions $(x(t))_{t\in\R}$, $(h(t))_{t\in\R}$, and $(f(t))_{t\in\R}$ respectively.}
\begin{equation}
	f(t)=x\star h = \int_\R x(\tau)h(t-\tau)\dtau, 	
\end{equation} 
can be understood as a generalised (noiseless) observation of $x$ through an acquisition device with impulse response $h$. 
Here, $h$ reflects the quality or precision of the observation device, since the "closer" $h$ is to a Dirac delta, the "closer" the convolved quantity $f$ is to the source $x$. 

\red{This convolution model, and the need to recover the source signal $x$ from a set of noisy observations of $f$, arise in a number of scenarios including: astronomy \cite{arxiv.2210.01666}, channel equalisation in telecommunications \cite{668635}, de-reverberation \cite{willardson2018time}, seismic wave reconstruction \cite{arya1978deconvolution}, and image restoration \cite{8197162} to name a few. }
In these scenarios, practitioners need to remove the unwanted artefacts introduced by the non-ideal filter $h$, in other words, they require to perform a \emph{deconvolution} to recover $x$ from $f$. \red{In practice, the deconvolution operates over observations that are noise-corrupted (due to sensing procedure) realisations of $f$, denoted $\y$ in our setting. Furthermore, we will \textit{de facto} assume that there are missing observations, since our formulation establishes the source $x$ and the convolution $f$ as continuous-time objects while the observations $\y$ are always finite meaning that there are "missing parts" in the observations.}  Fig.~\ref{fig:conv_diagram} illustrates this procedure for the image of a bird using the proposed method. 
\begin{figure}[h!]
\centering
\includegraphics[width=0.9\textwidth]{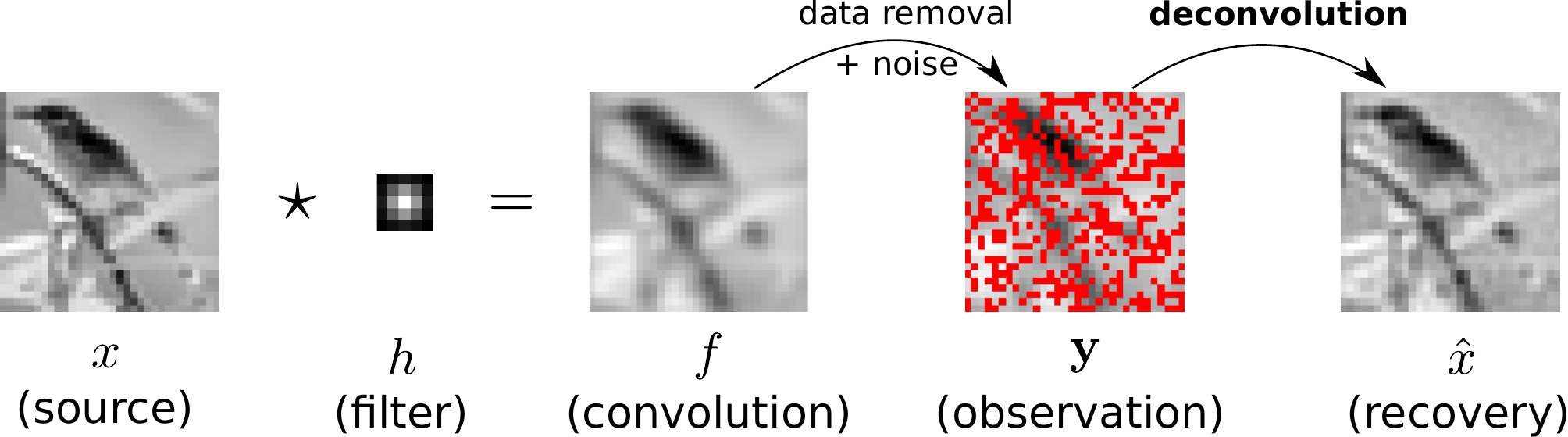}
\caption{Illustration of the deconvolution operation: an image $x$ is convolved with a filter $h$ to produce the blurry image $f$, which is then corrupted (additive noise and data removal in red) to yield $\y$. Deconvolution aims to produce an estimator of the original signal $x$ from $\y$ denoted $\hat{x}$. This image produced by the proposed method.}
\label{fig:conv_diagram}
\end{figure}

\red{As a particular instance of inverse problems, the deconvolution of corrupted signals has been largely addressed from a statistical perspective \cite{tarantola2005inverse,stuart2010inverse,aster2018parameter}}. This approach is intimately related to the linear filter theory, where the foundations laid by the likes of \cite{wiener1964extra,kalman1960new} are still at the core of modern-day implementations of deconvolution. A common finding across the vast deconvolution literature is that, under the presence of noisy or missing data, an adequate model for reconstructing $f$ (from $\y$) before the deconvolution is fundamental. From a Bayesian perspective, this boils down to an appropriate choice of the prior distribution of the source $x$; we proceed by imposing a Gaussian process prior on the source.

\subsection{Contribution and organisation} 

\red{Despite the ubiquity of the deconvolution problem and the attention it has received \cite{1457675}, we claim that the recovery of continuous-time signals from a finite number of corrupted (convolved) samples from a Bayesian standpoint, \textbf{which can provide error bars for the deconvolution}, has been largely underexplored.} With this challenge in mind, we study Bayesian nonparametric deconvolution using a \red{Gaussian process (GP) }prior over the latent source $x$, our method is thus termed Gaussian process deconvolution (GPDC).
The main contributions of our work include: i) the conditions for the proposed model to be well defined and how to generate samples from it; ii) the closed-form solution for the posterior deconvolution and when this deconvolution is possible; iii)  its application to the blind deconvolution and the required approximations; and  iv) experimental validation of our GPDC method on 1D and 2D real-world data.

The article is organised as follows. Sec.~\ref{sec:background} presents the convolution as a GP hierarchical model and the related literature. Sec.~\ref{sec:proposed_model} studies the \emph{direct model} (i.e., $x$ generates $f$ and $\y$) and defines the requirements of $h(t)$ and $x$ for $f(t)$ to be well defined point-wise. Then, Sec.~\ref{sec:deconvolution} focuses on the \emph{inverse problem} (i.e., $x$ is estimated from $\y$) and studies the recovery from the Fourier representation perspective. Sec.~\ref{sec:deconvolution-blind} addresses the blind deconvolution (i.e., $h$ is unknown), while Secs.~\ref{sec:exp} and \ref{sec:discussion} present the experimental validation and conclusions respectively.


\section{Deconvolution using GPs} 

\label{sec:background}

\subsection{Proposed generative model}
Let us consider the following hierarchical model (see Fig.~\ref{fig:gm} for the graphical representation):
\begin{alignat}{3}
	&\text{source process:}  &&\quad x(t) &&\sim \GP(m_x(t), K_x(t)),\label{eq:source}\\
	&\text{convolution:}&&\quad f(t) &&= \int_\R x(\tau)h(\tau-t)\dtau, \label{eq:conv_proc}\\
	&\text{observations:}&&\quad\ \ \ y_i && \sim \cN(f(t_i),\sigma_n^2), i=1,\ldots,N, \label{eq:conv_y}
\end{alignat}
where $\t=\left\{t_i\right\}_{i=1}^N\in \mathbb{R}^N$ indicate the observation times.  
First, eq.~\eqref{eq:source} places a stationary GP prior on the source with covariance $K_x(t)$; we will assume $m_x(t)=0$ for all $t$. This selection follows the rationale that the prior distribution is key for implementing deconvolution under missing and noisy data, and the well-known interpolation properties of GPs \cite{Rasmussen:2006}. Second, eq.~\eqref{eq:conv_proc} defines the continuous-time convolution process $f$ through a linear and time-invariant filter $h$. Third, eq.~\eqref{eq:conv_y} defines a Gaussian likelihood, where the noisy observations of $f$ at times $\t$ are denoted by $\y = [y_1,\ldots, y_N] \in \mathbb{R}^N$. \red{Notice that the observations $\y$ only see "parts" of $f(\cdot)$ and thus a Bayesian approach is desired to quantify the uncertainty related to the conditional process $x|\y$.}

\begin{figure}
\centering
\includegraphics[width=0.6\textwidth]{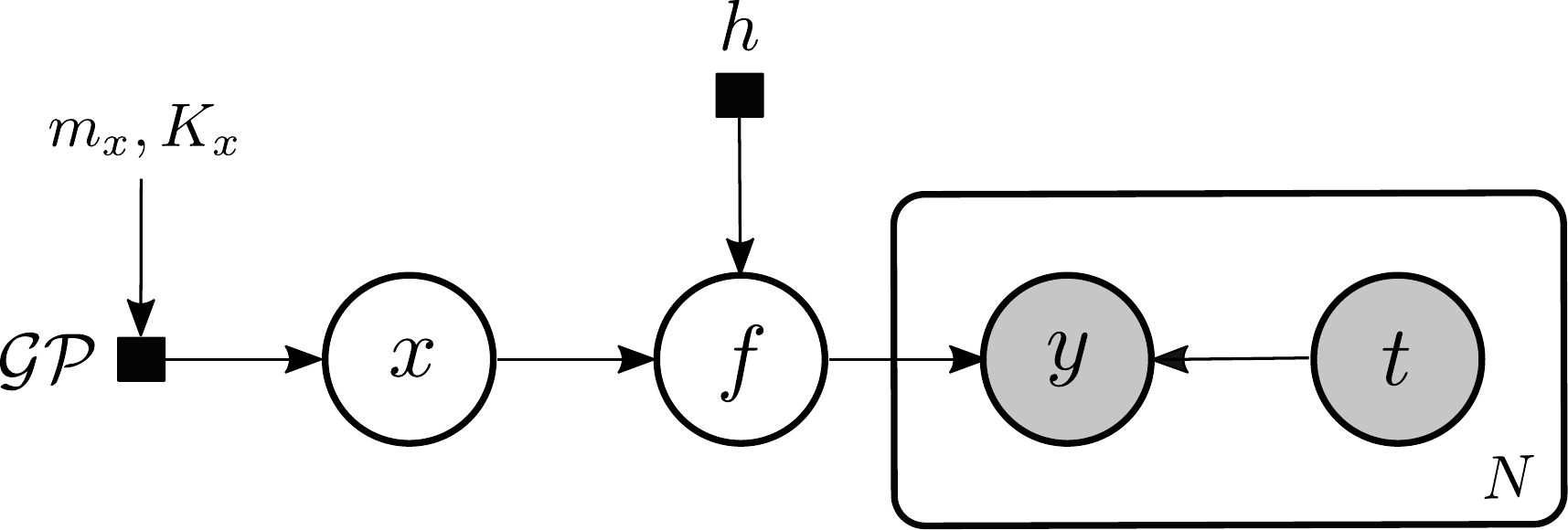}
\caption{Graphical model for the hierarchical GP convolution. Recall that observed and latent random variables are shown in grey and white circles respectively, while black squares denote fixed parameters.}
\label{fig:gm}
\end{figure}

\subsection{Relationship to classical methods and prior work}

Perhaps the simplest approaches to deconvolution are the \emph{Inverse FT} method, which performs deconvolution as a point-wise division between Fourier transforms $\invfourier{\fourier{\y}/\fourier{\mathbf{h}}}$ ---where $\mathbf{h}$ is a discrete version of the filter $h$--- and the \emph{Wiener method}, which finds the optimal estimate of $x$ in the mean-square-error sense. Both methods assume a known filter and perform a division in the Fourier domain which can be unstable in practice. The first Bayesian take on the deconvolution problem can be attributed to Bretthorst in 1992, who proposed to use a multivariate normal (MVN) prior for the discrete-time case \cite{bretthorst1992bayesian}. \red{A year later, Rhode and Whittenburg addressed the Bayesian deconvolution of exponentially-decaying sinewaves \cite{rhode1993bayesian}, and then Sir David Mackay's book presented the Bayesian derivation of the deconvolution problem} assuming an MVN prior and connects it to other methods with emphasis on image restoration \cite[Ch.~46]{mackay2003information}. In the Statistics community, deconvolution refers to the case when $x$ and $h$ are probability densities rather than time series; this case is beyond our scope. 

The use of different priors for Bayesian deconvolution follows the need to cater for specific properties of the latent time series $x$. For instance, \cite{adami2003variational} considered a mixture of Laplace priors which required variational approximations, while \cite{babacan2008variational} implemented a total variation prior with the aim of developing reliable reconstruction of images. However, little attention has been paid, so far, to the case when the source $x$ is a continuous time signal. Most existing approaches assume priors only for discrete-time finite sequences (or vectors) and, even though they can still be applied to continuous data by quantising time, their computational complexity explodes for unevenly sampled data or dense temporal grids. In the same manner, the kriging literature, which is usually compared to that of GPs, has addressed the deconvolution problem (see, e.g., \cite{jeulin1992practical,goovaerts2008deconv}) but has not yet addressed the continuous-time setting from a probabilistic perspective. \red{Additionally, recent advances of deconvolution in the machine learning and computer vision communities mainly focus on blending neural networks models with classic concepts such as prior design \cite{ren2020neural} and the Wiener filter \cite{NEURIPS2020_0b8aff04}, while still considering discrete objects (not continuous time) and not allowing in general for missing data.}

Convolution models have been adopted by the GP community mainly to 
 parametrise covariance functions for the multioutput \cite{boyle2005dependent,alvarez2009sparse,parra_tobar} and non-parametric cases \cite{tobar2015learning,bruinsma:2016}. \red{With the advent of deep learning, researchers have  replicated the convolutional structure of convolutional NNs (CNNs) on GPs. For instance, Deep Kernel Learning (DKL) \cite{wilson2016deep} concatenates a CNN and a GP so that the kernel of the resulting structure---also a GP---sports \textit{shared weights and biases}, which are useful for detecting  common features in different regions of an image and provides robustness to translation of objects. Another example are convolutional GPs
\cite{vanderwilk2017convolutional}, which extend DKL by concatenating the kernel with a \textit{patch-response function} that equips the kernel with convolutional structure without introducing additional hyperparameters. This concept has been extended to graphs and deep GPs---see \cite{walker2019graph,blomqvist2019deep} respectively. }

Though convolutions have largely aided the design of kernels for GPs, contributions in the "opposite direction", that is, to use the GP toolbox as a means to address the general deconvolution problem, are scarce. To the best of our knowledge, the only attempts to perform Bayesian deconvolution using a GP prior are works that either: focus specifically in detecting magnetic signals from spectropolarimetric observations \cite{ramos2015bayesian}; only consider discrete-time impulse responses \cite{tobar17a}; or, more recently, use an MVN prior for the particular case of a non-stationary Matérn covariance \cite{arjas2020blind}, \red{a parametrisation proposed by \cite{paciorek2006spatial} which has, in particular, been used for scatter radar data \cite{amt-15-3843-2022}.}

Building on the experimental findings of these works implementing deconvolution using GPs, our work focuses on the analysis and study of  kernels and filters, in terms of their ability to recover $x(t)$ from $\y$. The proposed strategy, termed GPDC, is expected to have superior modelling capabilities as compared to previous approaches, while having a closed form posterior deconvolution density which is straightforward to compute. However, there are aspects to be addressed before implementing GPDC, these are: i) when the integral in eq.~\eqref{eq:conv_proc} is finite in terms of the law of $x$, ii) when  $x$ can be recovered from $\y$, iii) how the blind scenario can be approached. Addressing these questions are the focus of  the following sections.


\section{Analysis of the direct model} 
\label{sec:proposed_model}

\subsection[Assumptions on x and h]{Assumptions on  $x$ and $h$, and their impact on $f$}

We assume that $x$ is a stationary GP and its covariance kernel $K_x$ is integrable (i.e., $K_x\in  L_1$); this is needed for $x$ to have a well-defined Fourier power spectral density (PSD). Additionally, although the sample paths $x$ are in general not integrable, they are \emph{locally integrable} due to $K_x\in  L_1$, therefore, we assume that the filter $h$ decays \emph{fast enough} such that the integral in eq.~\eqref{eq:conv_proc} is finite. This is always obtained when either $h$ has compact support or when it is dominated by a Laplacian or a Square Exponential function. If these properties (integrability and stationarity) are met for $x$, they translate to $f$ via the following results.

\begin{remark} If $f(t)$ in eq.~\eqref{eq:conv_proc} is finite point-wise for any $t\in\R$, then $f(t) \sim \GP(0, K_f(t))$ with stationary kernel 
\begin{equation}
	K_f(t) = \int_{\R^2} h(\tau')h(\tau) K_x(\tau - (\tau'- t)) \dtau\dtau'.\label{eq:conv_var}
\end{equation}
\end{remark} 

\begin{lemma}
	\label{lemma:K_f}
	If the convolution filter $h$ and the 
	covariance $K_x$ are both integrable, then $K_f$ in eq. (\ref{eq:conv_var}) is integrable.
\end{lemma}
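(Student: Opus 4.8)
The plan is to recognize the expression in eq.~\eqref{eq:conv_var} as an iterated convolution and then invoke the standard fact that $L_1$ is closed under convolution (Young's inequality with $p=q=r=1$). Concretely, define the reflected filter $\tilde h(s) := h(-s)$, which is integrable iff $h$ is, with $\|\tilde h\|_1 = \|h\|_1$. The key observation is that, after the change of variables $s = \tau'-\tau$ (holding $\tau$ fixed) inside the double integral, eq.~\eqref{eq:conv_var} can be rewritten as
\begin{equation}
	K_f(t) = \bigl((h\star\tilde h)\star K_x\bigr)(t),
\end{equation}
i.e.\ $K_f$ is the convolution of $K_x$ with the deterministic autocorrelation $\rho_h := h\star\tilde h$ of the filter. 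The first step, then, is to carry out this change of variables carefully and confirm the identity; Fubini is applicable because the triple integrand $|h(\tau')||h(\tau)||K_x(\cdot)|$ is jointly integrable once we know each factor is in $L_1$ (this is itself a consequence of the $L_1$-convolution bound, applied first to the inner pair).

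The second step is purely a norm estimate. By Young's inequality, $\rho_h = h\star\tilde h \in L_1$ with $\|\rho_h\|_1 \le \|h\|_1\|\tilde h\|_1 = \|h\|_1^2 < \infty$, using the hypothesis $h\in L_1$. Applying Young's inequality once more to $\rho_h \star K_x$, and using the hypothesis $K_x\in L_1$, gives
\begin{equation}
	\|K_f\|_1 \;=\; \|\rho_h \star K_x\|_1 \;\le\; \|\rho_h\|_1\,\|K_x\|_1 \;\le\; \|h\|_1^2\,\|K_x\|_1 \;<\;\infty,
\end{equation}
which is exactly the claim. One should also note that the Remark's standing hypothesis---$f(t)$ finite pointwise for all $t$---guarantees $K_f(t)$ is well defined pointwise to begin with, so there is no ambiguity in speaking of $\|K_f\|_1$; alternatively the finiteness of the iterated integral that Fubini delivers re-establishes this directly.

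I do not expect a genuine obstacle here: the content is entirely the bookkeeping in the change of variables that exhibits $K_f$ as $(h\star\tilde h)\star K_x$, plus two applications of Young's inequality. The only point requiring a little care is justifying the interchange of the order of integration—one should verify absolute convergence of the iterated integral before swapping, which is automatic from $h,K_x\in L_1$ via the same convolution estimate—and making sure the reflection/sign conventions in eq.~\eqref{eq:conv_var} (which has $h(\tau')h(\tau)K_x(\tau-(\tau'-t))$) are matched correctly to the autocorrelation $h\star\tilde h$ rather than $h\star h$. Once the algebra of variables is pinned down, the $L_1$ bound is immediate.
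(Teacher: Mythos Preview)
Your proposal is correct and is essentially the paper's own argument: the paper directly applies Fubini and the triangle inequality to the triple integral to obtain $\|K_f\|_1 \le \|h\|_1^2\,\|K_x\|_1$, which is exactly your two invocations of Young's inequality unpacked. Your additional observation that $K_f = (h\star\tilde h)\star K_x$ is a nice structural remark, but the analytic content---swap the order of integration and integrate out the $K_x$ argument first---is identical.
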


\begin{proof}
The integrability of $K_f(t)$ in eq.~\eqref{eq:conv_var} follows directly from applying Fubini Theorem and the triangle inequality (twice) to $\int_\R|K_f(t)|\dt$, to obtain $\int_\R|K_f(t)|\dt< \infty$ (full proof in the Appendix).
\end{proof}

\begin{remark} 
Lemma \ref{lemma:K_f} provides a \textbf{sufficient} condition for the integrability of $K_f$, thus theoretically justifying i) a Fourier-based analysis of $f$, and ii) relying upon the GP machinery to address deconvolution from the lens of Bayesian inference. However, the conditions in Lemma \ref{lemma:K_f} are \textbf{not necessary}; for instance, if the  filter $h$ is the (non-integrable) Sinc function, the generative model in eqs.~\eqref{eq:source}-\eqref{eq:conv_y} is still well defined (see Example \ref{ex:RBF-Sinc}).
\end{remark}

\subsection{Sampling from the convolution model} 
\label{sub:sampling}

We devise two ways of sampling from $f$ in eq.~\eqref{eq:conv_proc}: we could either i) draw a path from $x\sim\GP(0,K_x(t))$ and convolve it against $h$, or ii) sample directly from  $f\sim\GP(0, K_f(t))$. However, both  alternatives have serious limitations. The first one cannot be implemented numerically since the convolution structure implies that every element in $f$ depends on infinite values of $x$ (for a general $h$). The second alternative bypasses this difficulty by directly sampling a finite version of $f$, however, by doing so $x$ is integrated out, meaning that we do not know "to which" sample trajectory of $x$ the (finite) samples of $f$ correspond. 

The key to jointly sample finite versions (aka \emph{marginalisations}) of the source and convolution processes lies on the fact that, although the relationship between $x$ and $f$ established by eq.~\eqref{eq:conv_proc} is deterministic, the relationship between their finite versions becomes stochastic. Let us denote\footnote{Here, we use the compact notation $x(\t) = [x(t_1),\ldots,x(t_n)]$, where $\t =[t_1,\ldots,t_n]$.} finite versions of $x$ and $f$ by $\x = x(\t_x)$ and $\f = f(\t_f)$ respectively, where $\t_x\in\R^{N_x}$ and $\t_f\in\R^{N_f}$.  Therefore, we can hierarchically sample according to $p(\x,\f) = p(\f|\x)p(\x)$ in two stages: we first sample $\x\sim\MVN{0,K_x(\t_x)}$ and then $\f|\x\sim p(\f|\x)$ given by\footnote{We use the reversed-argument notation $K_{xf}(\t_x,\t_f) = K^\top_{fx}(\t_f,\t_x)$ to avoid the use of transposes. }
\begin{align}
	p(\f|\x) &= \MVN{\mu_{\f|\x},\sigma^2_{\f|\x}} \label{eq:samplig_dist}\\
	\mu_{\f|\x}&= K_{fx}(\t_f,\t_x)K_x^{-1}(\t_x)\x\nonumber\\
	\sigma^2_{\f|\x}&= K_f(\t_f) - K_{fx}(\t_f,\t_x)K_x^{-1}(\t_x)K_{xf}(\t_x,\t_f),\nonumber
\end{align}
where $K_{fx}(t_1,t_2) =  K_{fx}(t_1- t_2)$ is the stationary covariance between $f$ and $x$, given element-wise by
\begin{equation}
 	K_{fx}(t_1,t_2) = K_{fx}(t_1- t_2)= \int_\R h(t - (t_1-t_2))K_x(t)\dt. \label{eq:cross_var}
 \end{equation}
The integrability of ${K}_{fx}$ in eq. (\ref{eq:cross_var}) is obtained similarly to that of  $K_f$ in Lemma~\ref{lemma:K_f}, under the assumption that  $h,K_x\in L_1$.

Furthermore, let us recall that the conditional mean of $f(t)|\x$ is given by 
\begin{equation}
	\E{f(t)|\x}  =  \int_\R h(t + \tau)\E{x(\tau)|\x} \dtau. \label{eq:post_mean_f}
\end{equation}
This expression reveals that the expected value of $f(t)$ given $\x$ can be computed by first calculating the average interpolation of $x$ given $\x$, denoted $\E{x(\tau)|\x}$, and then applying the convolution to this interpolation as per eq.~\eqref{eq:conv_proc}. 

Additionally, let us also recall that the conditional variance of $f(t)|\x$ is given by 
\begin{equation*}
	\V{f(t)|\x} = \int_{\R^2} h(t+\tau)\V{x(\tau),x(\tau')|\x}h(t+\tau')\dtau\dtau'.
\end{equation*}
Observe that since the posterior variance of a GP decreases with the amount of observations,  $\V{x(\tau),x(\tau')|\x}$ approaches zero whenever $\t_x$ become more dense, and consequently so does $\V{f(t)|\x}$. Therefore, the more elements in $\x$, the smaller the variance  $\V{f(t)|\x}$ and thus the trajectories of $f|\x$ become concentrated around the posterior mean in eq.~\eqref{eq:post_mean_f}. This resembles a connection with the discrete convolution under missing source data, where one "interpolates and convolves", emphasising the importance of the interpolation (i.e., the prior) of $x$. Definition  \ref{def:SE_kernel} presents the Square Exponential (SE) kernel, and then Example  \ref{ex:sample} illustrates the sampling procedure and the concentration of $f|\x$ around its mean.
\red{

\begin{definition}\label{def:SE_kernel}
The stationary kernel defined as 
\begin{equation}
    K_{\text{SE}}(t) = \sigma^2\exp\left(-\frac{1}{2l^2} t^2\right)
\end{equation}
is referred to as Square Exponential (SE) and its parameters are magnitude $\sigma$ and lengthscale $l$. Alternatively, the SE kernel's lenghtscale can be defined in terms of its inverse lengthscale (or rate) $\gamma = \frac{1}{2l^2}$. The Fourier transform of the SE kernel is also an SE kernel, therefore, the support of the PSD of a GP with the SE kernel is the entire real line.
\end{definition} 

}

\begin{example}
	\label{ex:sample}
	Let us consider $K_x$ and $h$ to be SE kernels with lengthscale $l = \sqrt{0.05}$ (rate $\gamma = 10$), thus, $K_f$ and $K_{xf}$ are SE as well. Fig.~\ref{fig:sampling} shows $\x$ and $\f$ sampled over the interval $t\in[0,10]$: at the left (resp.~right) plot, we sampled a 40-dimensional (resp.~500-dimensional) vector $\x$ shown in red to produce a 1000-dimensional vector $\f$ shown in blue alongside the 95\% error bars for $p(f(t)|\x)$ in both plots. Notice how the larger  dimensionality of $\x$ over the fixed interval resulted in a tighter conditional density $p(f(t)|\x)$.
	\begin{figure*}
		\centering
		\includegraphics[width=0.8\textwidth]{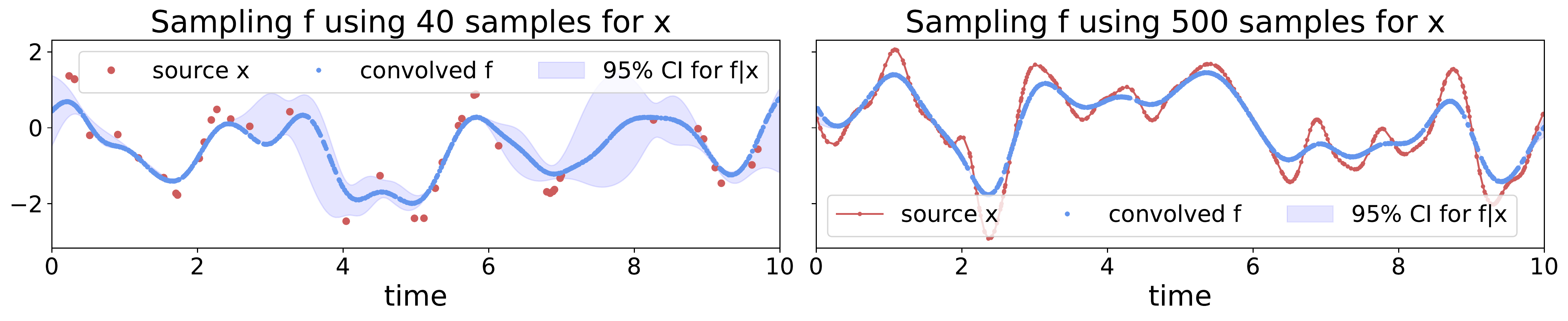}
		\caption{Hierarchical sampling $\x$ and $\f$ considering 40 (left) and 500 (right) samples for $\x$ in the interval [0,10]. Both $K_x$ and $h$ (and consequently $K_{xf}$ and $K_f$) are square-exponential kernels \red{with lengthscale $l = \sqrt{0.05}$ (rate $\gamma = 10$) }.}
		\label{fig:sampling}
	\end{figure*} 
\end{example}

\begin{remark}
The GP representation of the convolution process allows us to generate finite samples $\x$, $\f$ (over arbitrary inputs) of the infinite-dimensional processes $x$ and $f$ respectively. This represents a continuous and probabilistic counterpart to the classic discrete-time convolution that i) can be implemented computationally and, ii) is suitable for real-world scenarios where data may be non-uniformly sampled. 
\end{remark}


\section{The inverse problem: Bayesian nonparametric deconvolution} 
\label{sec:deconvolution}

We now turn to the inverse problem of recovering $x$ from the finite vector $\y = [y_1,\ldots,y_N]$ observed  at times  $\t = [t_1,\ldots,t_N]$. In our setting defined in eqs.~\eqref{eq:source}-\eqref{eq:conv_y}, the deconvolution $x|\y$ is a GP given by
\begin{align}\label{eq:deconv}
x|\y &\sim \GP(\mu, \sigma^2)\\
	\mu(t) &= K_{xy}(t, \t) K_\y^{-1}\y, \label{eq:deconv_mean}\\
\hspace{-0.85em}\sigma^2 (t_1,t_2) &= K_x(t_1,t_2) - K_{xy}(t_1, \t) K_\y^{-1}K_{yx}(\t, t_2)\label{eq:deconv_cov},
\end{align}
where $ K_\y = K_f(\t,\t) + \eye_N\sigma_n^2$ is the marginal covariance matrix of the observations vector $\y$, $\eye_N$ is the $N$-size identity matrix, and $K_{xy}(t_1,t_2)$ denotes the cross-covariance between $x(t_1)$ and $y(t_2)$, which is in turn equal to $K_{xf}(t_1,t_2)=K_{xf}(t_1-t_2)$.

\subsection{When is deconvolution possible?}
\label{sec:deconv_possible}
We aim to identify the circumstances under which $\y$ provides \emph{enough information} to determine $x$ with as little uncertainty as possible.
To this end, we consider a windowed (Fourier) spectral representation of  $x|\y$---the GP in eq.(\ref{eq:deconv})---given by 
\begin{equation}
	\hat x_w (\xi) = \fourier{w(t)x(t)}(\xi) = \int_\R w(t)x(t)e^{-j2\pi\xi t}\dt,\label{eq:spec_rep}
\end{equation}
where $\fourier{\cdot}$ denotes the continuous-time Fourier transform (FT) operator, $\xi\in\R$ is the frequency variable, and the window $w:\R\to\R$ has compact support (such that the integral above is finite). This windowed spectral representation is chosen since the GP $x|\y$ is nonstationary (its power spectral density is not defined), and the sample trajectories of  $x|\y$ following the distribution $\GP(\mu, \sigma^2)$ are not Lebesgue integrable in general (their FT cannot be computed). Based on the representation introduced in eq.~\eqref{eq:spec_rep}, we define the concept of \emph{successful recovery} and link it to the spectral representation of $K_x$ and $h$.

\begin{definition} 
	\label{def:recover}
	We say that $x(\cdot)$ can be \textbf{successfully recovered} from observations $\y=[y_1,\ldots,y_N]$ if, $\forall\xi\in\R$, the spectral representation $\hat x_w (\xi)$ in eq.~\eqref{eq:spec_rep} can be recovered at an arbitrary precision by increasing the amount of observations $N$, for an arbitrary compact-support window $w(\cdot)$.
\end{definition} 

\begin{theorem} 
\label{teo:recover}
For the setting defined in eqs.~\eqref{eq:source}-\eqref{eq:conv_y}, a necessary condition to successfully recover $x$ from $\y$ is that the spectral support of $h$ contains that of $K_x$, that is, $\supp{\fourier{K_x}}\subseteq\supp{\fourier{h}}$. 
\end{theorem}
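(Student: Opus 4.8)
The plan is to show the contrapositive: if there exists a frequency $\xi_0$ with $\fourier{K_x}(\xi_0) \neq 0$ but $\fourier{h}(\xi_0) = 0$ (i.e.\ $\supp{\fourier{K_x}}\not\subseteq\supp{\fourier{h}}$), then $x$ cannot be successfully recovered. The key observation is that the deconvolution posterior $x|\y$ transmits information about $x$ only through the cross-covariance $K_{xy}=K_{xf}$, and in the Fourier domain this cross-covariance carries a factor of $\fourier{h}$. Concretely, from eq.~\eqref{eq:cross_var}, $K_{xf}$ is (up to reflection) the convolution $h\star K_x$, so $\fourier{K_{xf}}(\xi) = \fourier{h}(\xi)\fourier{K_x}(\xi)$ (with suitable conjugation/reflection conventions). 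Hence on any frequency band where $\fourier{h}$ vanishes, the observations $\y$ are \emph{uncorrelated} with the corresponding spectral content of $x$, and — because everything is jointly Gaussian — uncorrelated means independent, so conditioning on $\y$ does not reduce the prior uncertainty there at all.

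The steps I would carry out, in order, are: (i) write the windowed spectral representation $\hat x_w(\xi)$ from eq.~\eqref{eq:spec_rep} as a linear functional of the GP $x|\y$, so that $\hat x_w(\xi)$ is itself (complex) Gaussian with a mean and variance inherited from eqs.~\eqref{eq:deconv_mean}--\eqref{eq:deconv_cov}; (ii) compute the covariance between $\hat x_w(\xi)$ and each observation $y_i$, obtaining $\mathrm{Cov}(\hat x_w(\xi), y_i) = \int_\R w(t) e^{-j2\pi\xi t} K_{xf}(t,t_i)\,\td t$, and show that this quantity, viewed as a function of $\xi$, has its energy concentrated where $\fourier{h}$ is supported — more precisely, that the contribution at $\xi_0$ is governed by $\fourier{h}(\xi_0)=0$ after accounting for the window-induced smearing; (iii) use the Gaussian conditioning / Schur-complement formula to write the posterior variance of $\hat x_w(\xi)$ as the prior variance minus a nonnegative "information" term that is a quadratic form in the cross-covariance vector $\big(\mathrm{Cov}(\hat x_w(\xi),y_i)\big)_{i=1}^N$; (iv) conclude that if one first takes $w$ to be a narrowing window concentrating around $\xi_0$ (so the smearing vanishes in the limit), the information term at $\xi_0$ stays bounded away from the prior variance uniformly in $N$, so $\hat x_w(\xi_0)$ cannot be pinned down — contradicting Definition~\ref{def:recover}.

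The main obstacle is step (ii)–(iv): the window $w$ has compact support, so $\fourier{w}$ is not compactly supported, which means $\mathrm{Cov}(\hat x_w(\xi_0),y_i)$ is not exactly zero even when $\fourier{h}(\xi_0)=0$ — it picks up leakage from nearby frequencies where $\fourier{h}\neq 0$. Handling this cleanly requires either (a) arguing in the limit of increasingly concentrated windows so the leakage $\to 0$, or (b) more carefully, observing that Definition~\ref{def:recover} demands recovery "for an arbitrary compact-support window," so it suffices to exhibit one family of windows for which recovery fails; one can also lean on the fact that $\fourier{K_x}(\xi_0)\neq 0$ on a whole neighbourhood (since $K_x\in L_1$ implies $\fourier{K_x}$ continuous) to make the prior variance of $\hat x_w(\xi_0)$ genuinely positive. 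I would also need the elementary but essential Gaussian fact that zero (asymptotic) reduction in posterior variance precludes recovery at arbitrary precision, and I would handle the complex-valued nature of $\hat x_w(\xi)$ by treating real and imaginary parts separately or invoking the circularly-symmetric complex Gaussian formalism. The integrability from Lemma~\ref{lemma:K_f} and the remark on $K_{fx}$ guarantee all Fourier transforms and interchanges of integration used along the way are legitimate.
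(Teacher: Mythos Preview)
Your proposal is essentially the same approach as the paper's proof: both compute the posterior variance of the windowed spectrum $\hat x_w(\xi)$ via the Gaussian conditioning (Schur-complement) formula, identify the cross-covariance $\mathrm{Cov}(\hat x_w(\xi),y_i)$ as $\hat w(\xi)\star\hat K_x(\xi)\hat h(\xi)e^{-j2\pi\xi t_i}$, and then argue that the reduction term cannot cancel the prior variance $\hat K_x(\xi)\star|\hat w(\xi)|^2$ unless $\supp\hat K_x\subseteq\supp\hat h$. The paper handles your leakage worry simply by invoking that the cancellation must hold for an \emph{arbitrary} window $w$, which is precisely your option (b); your discussion of narrowing windows and continuity of $\fourier{K_x}$ is more careful than the paper's somewhat informal final step, but the underlying argument is the same.
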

\begin{proof}
	From eqs.~\eqref{eq:source}-\eqref{eq:conv_y} and linearity of the FT, the windowed spectrum $\hat x_w (\xi)$ is given by a complex-valued\footnote{For complex-valued GPs, see \cite{boloix2018complex,ambrogioni2019complex,icassp15}.} GP, with mean and marginal variance (calculations in the Appendix):
	\begin{align}
	  	\mu_{\hat x_w }(\xi) &= \left(\hat m_x(\xi) + \red{\hat K_x(\xi)\hat h(\xi)}e^{-j2\pi\t\xi}K_\y^{-1}\y\right)\star \hat w(\xi),\\
	  	\sigma^2_{\hat x_w }(\xi) &= \blue{\hat{K}_x(\xi)}\star|\hat{w}(\xi)|^2 \nonumber\\
	  	&\quad\quad  -  \| \hat w(\xi)   \star  \red{\hat K_x(\xi)\hat h(\xi)}e^{-j2\pi\xi\t}  \|_{ K_\y^{-1}},\label{eq:post_var_xw}
	  \end{align}  
	 where $\|\v\|_A = \v^\top A \v$ denotes the Mahalanobis norm of $\v$ w.r.t. matrix $A$, and $\hat h = \fourier{h}$ was used as a compact notation for the FT of $h$. Following Definition \ref{def:recover}, successful recovery is achieved when the posterior  variance of $\hat x_w (\xi)$ vanishes, i.e., when the two terms at the right hand side of eq.~\eqref{eq:post_var_xw} cancel one another. For this to occur,  it is necessary that these terms have the same support, however, as this should happen for an arbitrary window $\hat w(\xi)$ we require that $\supp{\blue{\hat K_x(\xi)}} = \cup_{i=1}^N\supp\red{\hat K_x(\xi)\hat h(\xi)}e^{-j2\pi\xi t_i}$. Since $\cup_{i=1}^N\supp e^{-j2\pi\xi t_i}=\R$, what is truly needed is that $\supp{\blue{\hat K_x(\xi)}} = \supp{\red{\hat K_x(\xi)\hat h(\xi)} }$ which is obtained when $\supp{\hat K_x (\xi)}\subseteq\supp{\hat h (\xi) }$.
\end{proof}
\begin{remark}\label{rem:sufficient_obs} Theorem \ref{teo:recover} gives a \textbf{necessary} condition to recover $x$ from $\y$ in terms of how much of the spectral content of $x$, represented by $\hat K_x$, is not suppressed during the convolution against $h$ and thus can be extracted from $\y$. A stronger \textbf{sufficient} condition for successful recovery certainly depends on the locations where $\y$ is measured. Intuitively, recovery depends on having enough observations, and on the vector $e^{-j2\pi\xi \t}$ being aligned with the eigenvectors of  $K_y^{-1}(\t,\t)$ defining the norm in eq.~\eqref{eq:post_var_xw}; thus resembling the sampling theorem in \cite{Shannon_1949,Nyquist_1928}. 
\end{remark}

Definition \ref{def:sinc_kernel} introduces the Sinc kernel \cite{tobar19b}. Then, Example \ref{ex:RBF-Sinc} illustrates the claims in Theorem \ref{teo:recover} and Remark \ref{rem:sufficient_obs}, where GPDC recovery is assessed in terms of spectral supports and amount of observations. 

\red{

\begin{definition}\label{def:sinc_kernel}
The stationary kernel defined as 
\begin{equation}
    K_{\text{Sinc}}(t) = \frac{\sigma^2 \sin( \Delta\pi t)}{ \Delta \pi t}
\end{equation}
is referred to as the (centred) Sinc kernel and its parameters are magnitude $\sigma$ and width $\Delta$. Though the Sinc function is not integrable (see Lemma \ref{lemma:K_f}), it admits a Fourier transform which takes the form of a rectangle of width $\Delta$ centred in zero. Therefore, a GP with a Sinc kernel does not generate paths with frequencies greater than $\Delta/2$. 
\end{definition} 

}

	\begin{figure*}
		\centering
		\includegraphics[width=0.48\textwidth]{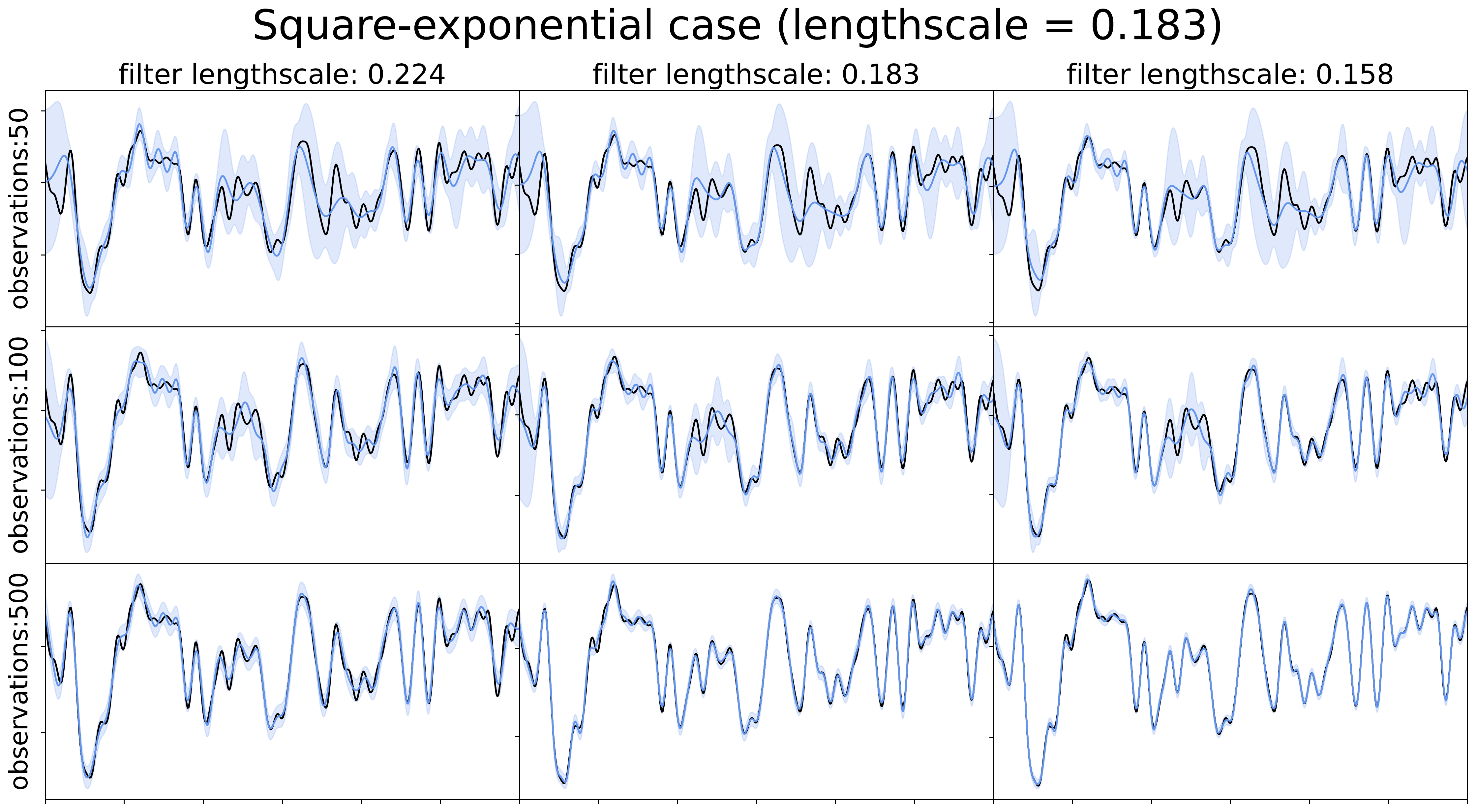}\hfill
		\includegraphics[width=0.48\textwidth]{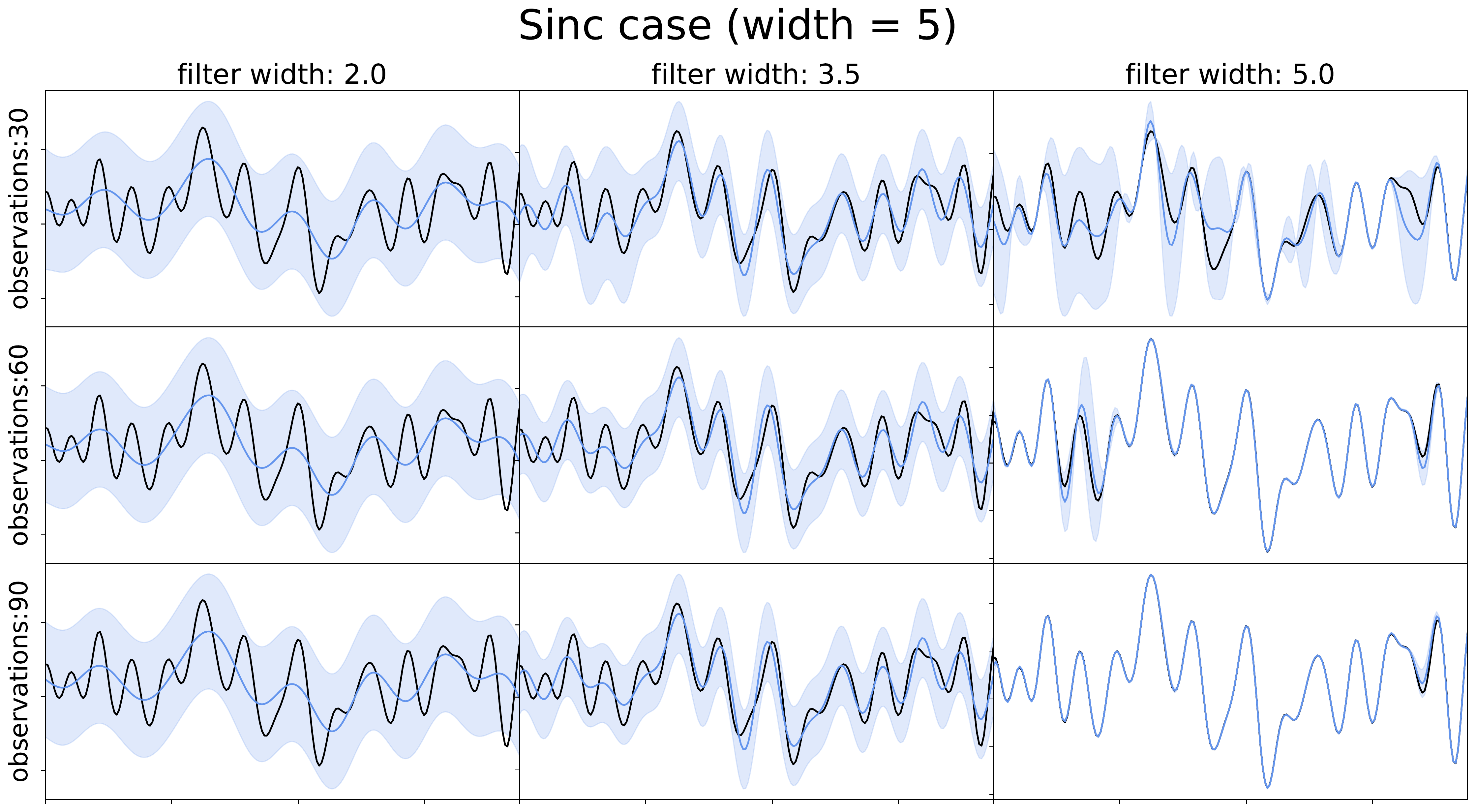}
		\caption{Implementation of GPDC when both $h$ and $K_x$ are SE (left) or Sinc (right). In each case, we considered increasing number of observations $\y$ (top to bottom) and different parameters for $h$ (left to right)---\red{the values for all the width and lenghtscale parameters are presented in the figure titles}. The plots show the ground truth source $x$ (black), the posterior GPDC deconvolution mean (blue) and GPDC's 95\% error bars (light blue). In line with Theorem \ref{teo:recover} and Remark \ref{rem:sufficient_obs}, notice how in the SE case (left) more observations improve the recovery, whereas in the Sinc case (right) recovery is not possible even for increasing amount of observations when the filter is too narrow (first and second columns).}
		\label{fig:synth_deconv}
	\end{figure*}
	\begin{example}
\label{ex:RBF-Sinc}
	Let us consider two scenarios for GPDC: In the first one, both $K_x$ and $h$ are Square Exponential  (SE) kernels and therefore $\supp \fourier{K_x} =  \supp\fourier {h} = \R$. In the second case, both $K_x$ and $h$ are Sinc kernels \cite{tobar19b}, and the overlap of their PSDs depends on their parameters. We implemented GPDC in both cases for different amount of observations and parameters, in particular, we considered the scenario where $\supp \hat K_x\not\subset \supp\hat h$ for the Sinc case. Fig.~\ref{fig:synth_deconv} (left) shows how in the SE case an increasing number of observations always improves the recovery regardless of the parameters of $h$ and $K_x$. On the contrary, notice that for the Sinc case (Fig.~\ref{fig:synth_deconv}, right), even with a large number of observations, the latent source $x$ cannot be recovered. This is due to certain spectral components of $x$ being removed by the \emph{narrowbandness} of the convolution filter $h$. \red{All magnitude parameters were set one, see Fig.~\ref{fig:synth_deconv} for the width (Sinc) and lenghtscale parameters (SE)}.

\end{example} 

\section{The blind deconvolution case }
\label{sec:deconvolution-blind}
\subsection{Training and observability}
\label{sec:training}

\red{Implementing GPDC requires choosing the hyperparameters, i.e., the parameters of $K_x$ and $h$, and the noise variance $\sigma^2$. Depending on the application the filter might be known beforehand, for instance, in de-reverberation \cite{naylor2010speech} $h$ is given by the geometry of the room, whereas in astronomy $h$ is given by the atmosphere and/or the telescope \cite{starck2002deconvolution,tobar21b}. When $h$ is unknown, a case referred to as \emph{blind deconvolution}, its parameters have to be learnt from data alongside the other hyperparameters. }

\red{We fit the proposed model---see eqs.~\eqref{eq:source}-\eqref{eq:conv_y}---by maximising the log-likelihood  
\begin{equation}
	l(K_x,\sigma^2,h) = -\frac{n}{2}\log 2\pi - \frac{1}{2}\log\det K_\y - \frac{1}{2}\y^\top K_\y^{-1}\y, \label{eq:likelihood}
\end{equation}
where $K_\y = K_f(\t,\t) + \eye_N$, with $\eye_N$ the identity matrix of size $N$, and the relationship between $K_f$ and $K_x$ follows from eq.~\eqref{eq:conv_var}. Therefore, $K_x$, $h$ and $\sigma^2$ appear in $l$ in eq.~\eqref{eq:likelihood} through $K_\y$.}

Maximum likelihood, however, might not recover all hyperparameters in a unique manner, since $h$ and $K_x$ are \emph{entangled} in $K_\y$ and thus can be unidentifiable. For instance, if both $h$ and $K_x$ are SEs (as in Example \ref{ex:sample}) with lengthscales $l_h$ and $l_x$ respectively, then $K_\y$ is also an SE with lenghtscale $l_y = l_x + l_h$; therefore, the original lengthscales cannot be recovered from the learnt $l_y$. More intuitively, the unobservability of the convolution can be understood as follows: a given signal $f$ could have been generated either i) by a fast source $x$ and a wide filter $h$, or ii) by a slow source and a narrow filter. Additionally, it is impossible to identify the temporal location of $h$ only from $\y$, since the likelihood is insensitive to time shifts of $x$ and $h$. Due to the symmetries in the deconvolution problem learning the hyperparameters should be aided with as much information as possible about $h$ and $K_x$, in particular in the \emph{blind} scenario. 

\subsection[A tractable approximation of h]{A tractable approximation of $h$}
\label{sec:discrete_h}

Closed-form implementation of GPDC \emph{only} depends on successful computation of the integrals in eqs.~\eqref{eq:conv_var} and \eqref{eq:cross_var}. Those integrals can be calculated analytically for some choices of $K_x$ and $h$, such as the \red{SE or Sinc kernels in Definitions \ref{def:SE_kernel} and \ref{def:sinc_kernel} respectively}, but are in general intractable. Though it can be argued that computing these integrals might hinder the general applicability of GPDC, observe that when the filter $h$ is given by a sum of Dirac deltas $\left\{w_i \right\}_{i=1}^M$ at locations $\{l_i\}_{i=1}^M$, i.e.,
\begin{equation}
 	h(t) = \sum_{i=1}^M w_i\delta_{l_i}(t)\label{eq:discrete_h},
 \end{equation} 
 the covariance $K_f$ and the cross-covariance $K_{fx}$ turn into summations of kernel evaluations:
 \begin{align}
 	K_f(t) &= \sum_{i=1}^M\sum_{j=1}^M w_iw_jK_x(l_i - (l_j-t))\label{eq:Kf-h}\\
 	K_{xf}(t_1,t_2) = K_{xf}(t_1- t_2) &= \sum_{i=1}^M w_i K_x(l_i - (t_1-t_2))\label{eq:Kxf-h}.
 \end{align}

The discrete-time filter $h$ is of interest in itself, mainly in the digital signal processing community, but it is also instrumental in approximating GPDC for general applicability. This is because when the integral forms in $K_f,K_{xf}$, in eqs.~\eqref{eq:conv_var} and \eqref{eq:cross_var}, cannot be calculated in closed-form, they can be approximated using different integration techniques such as quadrature methods, Monte Carlo or even importance sampling (see detailed approximations in the Appendix). 

In terms of training (required for blind deconvolution) another advantage of the discrete-time filter is that its hyperparameters (weights $\left\{w_i \right\}_{i=1}^M$) do not become entangled into $K_\y$, but rather each of them appear bilinearly in it---see eq.~\eqref{eq:Kf-h}. This does not imply that parameters are identifiable, yet it  simplifies the optimisation due to the bilinear structure. Lastly, notice that since the sample approximations of $K_y$ (e.g., using quadrature or Monte Carlo) are of low order compared with the data, they do not  contribute with critical computational overhead, since evaluating eq.~\eqref{eq:likelihood} is still dominated by the usual GP cost of $\mathcal{O}(N^3)$. \red{Definitions \ref{def:SM_kernel} and \ref{def:triangular_filter} present the  Spectral Mixture kernel and the triangular filter respectively, then Example \ref{ex:synth-h-discrete} illustrates the discretisation procedure for training GPDC in the blind and non-blind cases as described above.} 

\red{

\begin{definition}\label{def:SM_kernel}
The stationary kernel defined as 
\begin{equation}
    K_{\text{SM}}(t) = \sigma^2\exp\left(-\frac{1}{2l^2} t^2\right)\cos\left(2\pi\nu t\right)
\end{equation}
is referred to as the (single component) Spectral Mixture  (SM) and its parameters are magnitude $\sigma$, lengthscale $l$ and frequency $\nu$. Akin to the SE kernel in Def.~\ref{def:SE_kernel}, the SM kernel's lenghtscale can be defined via its rate $\gamma = \frac{1}{2l^2}$. The Fourier transform of the SM kernel is an SE kernel centred in frequency $\nu$.
\end{definition}

\begin{definition}\label{def:triangular_filter}
The function defined as 
\begin{equation}
    h_{\text{Tri}}(t) = \sigma^2\max\left(1-\frac{2|t|}{\Delta},0\right)
\end{equation}
is referred to as the triangular filter, also known as the Bartlett window, and its parameters are magnitude $\sigma$ and width $\Delta$. 
\end{definition} 

}

	\begin{figure}
	\centering
	  	\includegraphics[width=.75\textwidth]{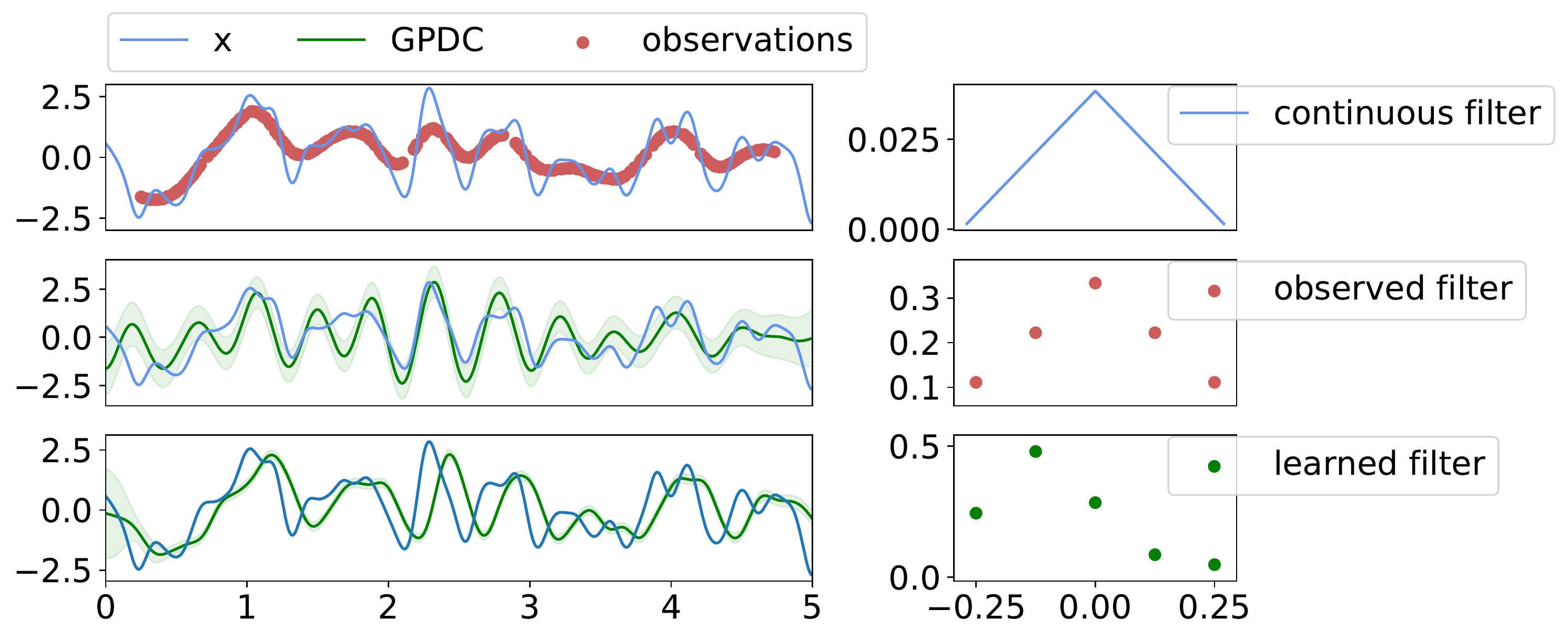}
	\caption{GPCD with a discrete-approximation filter $h$. Top: ground truth source $x$, observations $\y$ and filter $h$ \red{The ground truth $x$ is drawn from a GP with a spectral  mixture kernel $K_x$ (see Def.~\ref{def:SM_kernel}) with parameters $\sigma=1$, $\gamma=50$, $\nu=1$, $\sigma_n=0.01$} . Middle: GPDC (green) using a discrete filter calculated from the true $h$ only $K_x$ was learnt, \red{our method recovered $\gamma=11.8$, $\nu=1.9$ and $\sigma_n=0.03$ and the magnitude $\sigma$ was fixed to one due to unobservability arising from the multiplication of $h$ and $K_x$}. Bottom: posterior deconvolution in the blind case: all hyperparameters, $h$, $K_x$ and $\sigma^2$, are learnt via maximum likelihood. The learnt filter is plotted in the bottom right corner (green dots) and the recovered parameters were $\gamma=38.8$, $\nu=1e^{-6}$ and $\sigma_n=0.01$ ($\sigma$ still fixed to one). Observe how, despite the seemingly poor recovery of the parameter values, the deconvolution in the blind case (bottom-left plot, green) was correctly detected only up to an unidentifiable temporal due to the unobservability issues mentioned above. }
	\label{fig:ex_discrete_h}
	\end{figure}

\begin{example}\label{ex:synth-h-discrete}
	Let us consider the case where $K_x$ is an SE kernel (see Def.~\ref{def:SE_kernel}) and $h$ is a triangle (see Def.~\ref{def:triangular_filter}). We avoid computing the integrals for $K_f$ and $K_{xf}$, and consider two scenarios: i) learn discrete approximations both for $h$ and $K_x$ via maximum likelihood as in Sec.~\ref{sec:deconvolution-blind}\ref{sec:training}, ii) fix a discrete filter according to the true $h$, and only train $K_x$ as in Sec.~\ref{sec:deconvolution-blind}\ref{sec:discrete_h}. Fig.~\ref{fig:ex_discrete_h} shows these implementations, notice how the discrete filter (only 5 points) allows for a reasonable recovery of the latent source  $x$ from $\y$. Furthermore, according to the unobservability of $h$, there is an unidentifiable lag between the true source and the deconvolution for the  blind case.

\end{example}

\section{Experiments}
\label{sec:exp}

We tested GPCD in two experimental settings. The first one simulated an acoustic de-reverberation setting and its objective was to validate the point estimates of GPDC in  the noiseless, no-missing-data, case (i.e., $\sigma_n^2=0$) against standard deconvolution methods. The second  experiment dealt with the recovery of latent images from low-resolution, noisy and partial observations, both when the filter $h$ is known or unknown (a.k.a. blind superresolution). The code used for the following experiments and the examples in the paper can be found in \url{https://github.com/GAMES-UChile/Gaussian-Process-Deconvolution}.

\begin{figure}[t]
\centering
        \includegraphics[width=0.75\textwidth]{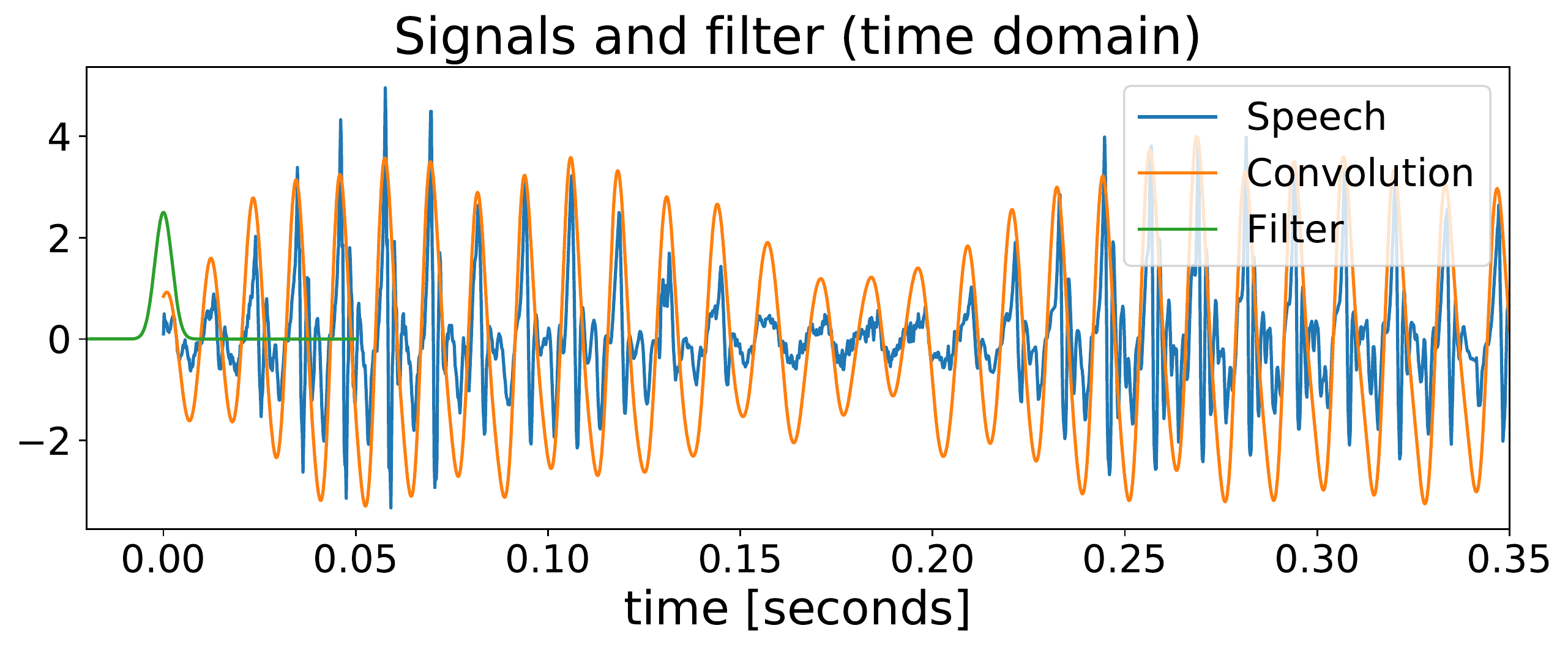}\\
        \includegraphics[width=0.75\textwidth]{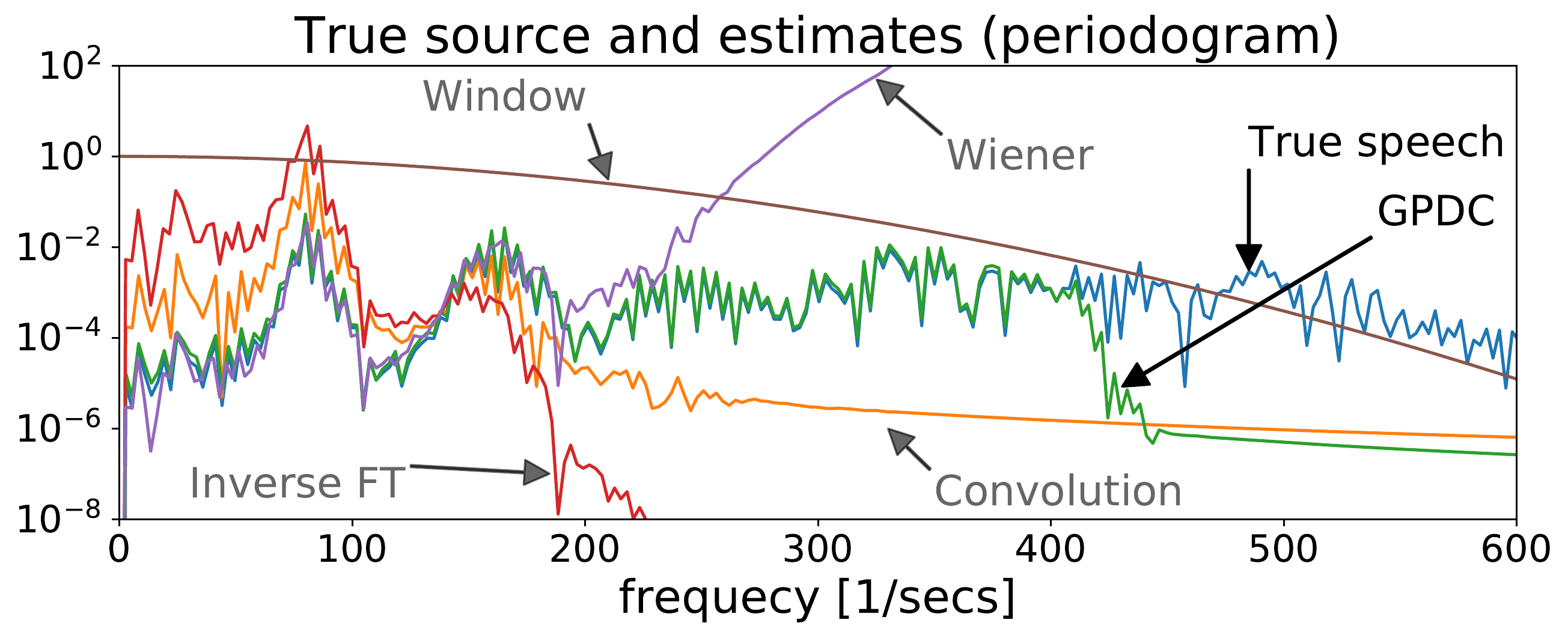}
        \caption{Recovering a speech signal using GPDC, Wiener and the Inverse FT method. Signals are shown in the temporal domain (top) and estimates in the frequency domain (bottom).}
        \label{fig:reverb} 
\end{figure}

\subsection{Acoustic deconvolution.} We considered a 350 [ms] speech signal (from \url{http://www.mcsquared.com/reverb.htm}), \red{ re-sampled at 5512.5 hertz (i.e., 2000 samples), standardised it and then convolved it against an SE filter of lenghtscale $l=2.2$ [ms]}. Fig.~\ref{fig:reverb} (top) shows the speech signal, the filter and the convolution. \red{ Since speech is known to be piece-wise stationary, we implemented GPDC using an SE kernel, then learnt the hyperparameters via maximum likelihood as explained in Section 5\ref{sec:training}. The learnt hyperparameters were $\sigma = 0.95, l = 5\cdot10^{-4}, \sigma_{\text{noise}} = 0.12$. Notice that these values are consistent with the data pre-processing (the variances sum approximately one since the signal was standardised) and from what is observed in Fig.~\ref{fig:reverb} (top) regarding the lengthscale.} 

The proposed GPDC convolution (assuming a known filter $h$ given by an SE kernel presented above) was compared  against the \emph{Wiener} deconvolution and the \emph{Inverse FT} methods mentioned in Sec.~\ref{sec:background}. Fig.~\ref{fig:reverb} (bottom) shows the periodogram of the data alongside Wiener, Inverse FT and GPDC, and the filter $h$. Observe how GPCD faithfully followed the true latent source across most the spectral range considered, whereas Wiener diverged and the Inverse FT decayed rapidly perhaps due to their numerical instability. It is worth noting that the GPDC estimate ceases to follow the true source precisely at the frequency when the filter decays, this is in line with Thm.~\ref{teo:recover}. Table \ref{table:audio_perf} shows the performance of the methods considered in time and frequency, in particular via the discrepancy between the true power spectral density (PSD) and its estimate using the MSE, the KL divergence and Wasserstein distance \cite{tobar20c} (the lower the better). Notice that GPDC outperformed Wiener and inverse FT under all metrics. The estimates in the temporal domain are shown in additional figures in the Appendix.

 \begin{figure*}[t]
  \centering
    \includegraphics[width=0.99\textwidth]{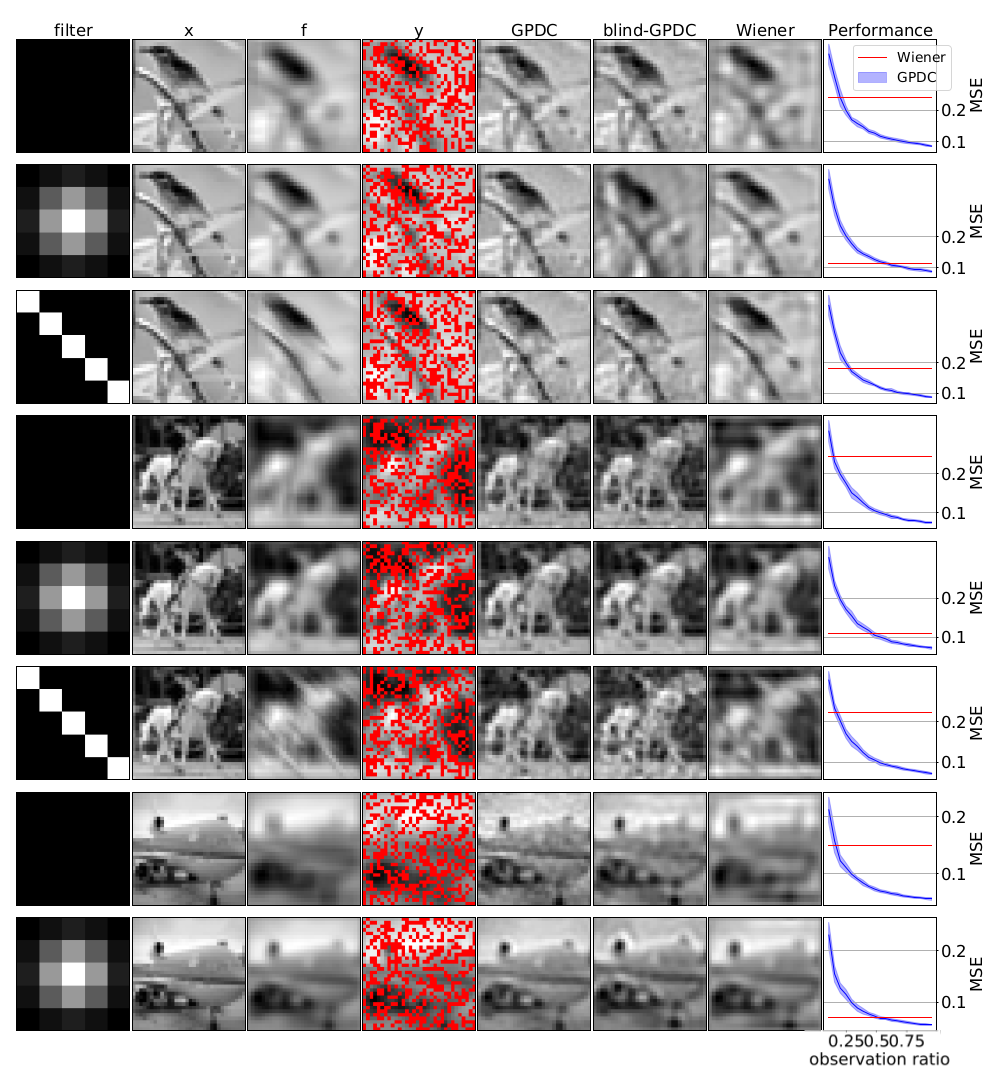}
    \caption{Deconvolution of 8 images from lower-resolution, noisy and incomplete observations. These 8 images were created from 3 source images to which we applied one of the five described filters. Left to right: filter used $h$, true  image  $x$, convolution $f$, observation $\y$ ($25\%$ missing pixels in red), GPDC estimate, blind-GPDC estimate and Wiener estimate. Rightmost plots show the performances of GPDC (blue) and Wiener filter (red) versus the number of observed pixels (2-standard-deviation error bars computed over 20 trials for GPDC).}
\label{fig:CIFAR}
\end{figure*}

\begin{table}[t]
\caption{Quantitative evaluation of the audio de-reverberation experiment in time and frequency}
\label{table:audio_perf}
\vspace{0.5em}
\centering
\small
\begin{tabular}{l|ccc}
       metric    & GPDC            & Wiener & inv-FT \\ \hline
MSE (time) & \textbf{19.0}   & 35.0   & 41.9   \\
MSE (PSD) & \textbf{0.015}  & 0.058  & 0.153  \\

Kullback–Leibler (PSD)  & \textbf{0.05}   & 0.20   & 0.45   \\
Wasserstein (PSD)  & \textbf{2124.3} & 3643.8 & 4662.6 \\
\end{tabular}
\end{table}


\subsection{Blind image super-resolution.} For a $32\times 32$ image $x$, we created a convolved image $f$ using a $5\times 5$ filter $h$, and a noise-corrupted missing-data {($60\%$ of pixels retained)} observation $\y$. We implemented GPDC to recover $x$ from $\y$ both when $h$ is \textbf{known} and when it is \textbf{unknown}, \red{therefore, for each image, we considered i) the case where the discrete filter $h$ is known and  used by GPDC, and ii) the case where $h$ is learnt from data via maximum likelihood. We present results using three different filters: constant, unimodal and diagonal for the main body of the manuscript (Fig.~\ref{fig:CIFAR}), and other additional shapes in the Appendix. In all cases, we assumed an SE kernel for the source (ground truth image) and for each case we learnt the lenghtscale $l$, the magnitude $\sigma$ and the noise parameter $\sigma_n$. For the blind cases the discrete filter $h$ was also learnt.}

GPDC was compared against the Wiener filter (from \texttt{scikit-image}), applied to the \textbf{complete}, \textbf{noiseless}, image  $f$. The reason to compare the proposed GPDC to the standard Wiener filter  applied over the true image $f$ instead of the generated observations $\y$ is twofold: i)  it serves as a benchmark that uses all the image information, and ii) the Wiener is unable to deal with missing data (as in $\y$) in its standard implementation.
Fig.~\ref{fig:CIFAR} shows the results for combinations of 3 different images from CIFAR-10 \cite{cifar10} and up to three different filters for each image. Notice how both non-blind and blind GPCD were able to provide reconstructions that are much more representative of the true image ($x$) than those provided by the Wiener filter. The superior performance of GPDC, against the Wiener benchmark, is measured through the mean squared error with respect to the true image $x$ as a function of the amount of seen data (rightmost plot in Fig.~\ref{fig:CIFAR}). The Appendix includes additional examples with different images and filters.

\

\section{Discussion and Conclusions} 
\label{sec:discussion}

We have proposed Gaussian process deconvolution (GPDC), a methodology for Bayesian nonparametric deconvolution of continuous-time signals which builds on a GP prior over the latent source $x$ and thus allows us to place error bars on the deconvolution estimate. We studied the direct generative model (the conditions under which it is well defined and how to sample from it), the inverse problem (we provided necessary conditions for the successful recovery of the source), the blind deconvoltion case, and we also showed illustrative examples and experimental validation on dereverberation and super-resolution settings. A key point of our method in connection with the classical theory can be identified by analysing eq.~\eqref{eq:deconv_mean} in the Fourier domain, where we can interpret GPDC as a nonparametric, missing-data-able, extension of the Wiener deconvolution. In this sense, the proposed method offers a GP interpretation to classic deconvolution, thus complementing  the deconvolution toolbox with the abundant resources for GPs such as off-the-shelf covariance functions, sparse approximations, and a large body of dedicated software. 

\red{When constrained to the discrete and non-missing data scenario, our method reduces to the standard Wiener deconvolution, known in the literature \cite{wiener1964extra}; thus, we have focused on comparing GPDC to the Wiener deconvolution method. However, despite the interpretation of GPDC from the viewpoint of classic deconvolution, we have considered a setting that is entirely different to that of digital communications. We have focused on estimating a continuous-time object (the source $x$) using a set of finite observations $\y$, which are to be understood as noise-corrupted and missing-part realisations of the convolution process $f = x \star h$, and to quantify the uncertainty of this estimate. Since we have limited information about the source ($x$ can only be known through $\y$ as per eq.~\eqref{eq:conv_y}), we take a Bayesian approach and impose a prior over $x$, this prior is a Gaussian process. To the best of our knowledge, there is no prior work addressing this setting in conceptual terms from a theoretical perspective, with the exception of a few works that have applied it to specific problems (see \cite{ramos2015bayesian,tobar17a,arjas2020blind,paciorek2006spatial,amt-15-3843-2022}).

As far as novel deconvolution methods are concerned, the trend in the literature is to merge neural networks with classical concepts such prior design \cite{ren2020neural} and the Wiener filter \cite{NEURIPS2020_0b8aff04}, and are mainly applied to images, perhaps following the strong and steady advances in deep learning for computer vision. However, despite some recent applications of GPs to the deconvolution problem in particular settings, there are, to the best of our knowledge, no method addressing the recovery of a continuous-time (or continuous-space in the case of images) object from a finite number of noisy observations for both the blind and non-blind cases with the two main properties of our proposal: i) taking a Bayesian standpoint that allows for the determination of error bars, and ii) providing the Fourier-inspired guarantees in Sec.~\ref{sec:deconvolution}\ref{sec:deconv_possible}. In this sense, we claim that ours is the first application-agnostic study of the method which established the conditions for the proper definition of the hierarchical model, the capacity of the deconvolution procedure, and a principled quantification of uncertainty.}

We hope that our findings pave the way for applications in different sciences and also motivates the GP community to consider extensions of our study. In particular, we envision further research efforts to be dedicated in the following directions: 
\begin{itemize}
	\item to develop sufficient (rather than only necessary) conditions for deconvolution to complement Theorem \ref{teo:recover} and thus connecting with the interface between spectral estimation and GPs \cite{tobar18}. 
	\item to construct sparse GPs by controlling temporal correlation through trainable, multi-resolution, convolutions. This would provide computationally efficient sparse GPs with clear reconstructions guarantees.
	\item to develop a sparse GP version of the presented methodology in order to apply GPDC for deconvolution of large datasets
\end{itemize}

\enlargethispage{20pt}


\dataccess{All data used in this article is either synthetic or public. Experiment 1 uses audio data from \url{http://www.mcsquared.com/reverb.htm}, while Experiment 2 uses CIFAR-10 \cite{cifar10}.}



\funding{We acknowledge financial support from Google and the following ANID-Chile grants: Fondecyt-Regular 1210606, the Advanced Center for Electrical and Electronic Engineering (Basal FB0008) and the Center for Mathematical Modeling (Basal FB210005).}



\bibliographystyle{RS}
  \bibliography{references.bib}
\section*{Appendix}

This section contains the detailed calculation needed for Theorem 1, as well as detailed figures to the audio deconvolution experiment and more examples for the image deconvolution experiment. 
\subsection{Extended proof of Lemma 1}

\begin{lemma} 
  \label{lemma:K_f}
  If the convolution filter $h$ and the 
  covariance $K_x$ are both integrable, then $K_f(t)$ is integrable.
\end{lemma}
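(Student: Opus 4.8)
The plan is to bound the $L_1$ norm of $K_f$ directly by the $L_1$ norms of $h$ and $K_x$, following the classical argument that the convolution of two $L_1$ functions is again in $L_1$, with the minor twist that eq.~\eqref{eq:conv_var} is a \emph{double} convolution rather than a single one. First I would write out $\int_\R|K_f(t)|\,\dt$, substitute the expression \eqref{eq:conv_var}, and move the absolute value inside the double integral over $(\tau,\tau')$ by applying the triangle inequality twice, obtaining the upper bound $\int_\R\int_{\R^2}|h(\tau')|\,|h(\tau)|\,|K_x(\tau-(\tau'-t))|\,\dtau\dtau'\dt$.

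Next I would invoke Tonelli's theorem to reorder the integrals and perform the $t$-integration first. This interchange is legitimate because the integrand is non-negative and jointly measurable: $h,K_x\in L_1$ are measurable by assumption, and $(t,\tau,\tau')\mapsto|h(\tau')|\,|h(\tau)|\,|K_x(\tau-(\tau'-t))|$ is the product of such functions composed with affine maps, hence measurable on the product space. By translation invariance of Lebesgue measure, for every fixed $\tau,\tau'$ one has $\int_\R|K_x(\tau-(\tau'-t))|\,\dt=\int_\R|K_x(s)|\,\text{d}s=\|K_x\|_1$. The remaining integral then separates as $\left(\int_\R|h(\tau')|\,\dtau'\right)\left(\int_\R|h(\tau)|\,\dtau\right)\|K_x\|_1=\|h\|_1^2\,\|K_x\|_1$, which is finite precisely because $h,K_x\in L_1$. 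Hence $\int_\R|K_f(t)|\,\dt<\infty$, i.e. $K_f\in L_1$; as a by-product, Fubini's theorem now retroactively justifies all the interchanges for the signed integrand as well, so the manipulations are fully rigorous.

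The only point that genuinely requires care is this application of Tonelli/Fubini — one must make sure the iterated integral is an integral against the product measure and that the integrand is jointly measurable, so that the reordering is valid \emph{before} knowing finiteness. Everything else (the two triangle inequalities and the change of variables $s=\tau-\tau'+t$ in the inner integral) is routine, and I do not anticipate any real obstacle: the lemma is in essence Young's inequality $\|K_f\|_1\le\|h\|_1\|h\|_1\|K_x\|_1$ specialised to this nested convolution.

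Finally, I would remark that the same argument with one fewer factor of $h$ yields the integrability of $K_{fx}$ in eq.~\eqref{eq:cross_var}, with the bound $\|K_{fx}\|_1\le\|h\|_1\|K_x\|_1$, thereby also justifying the claim made immediately after that equation.
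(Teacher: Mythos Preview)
Your proposal is correct and follows essentially the same route as the paper: substitute \eqref{eq:conv_var}, apply the triangle inequality twice, swap the order of integration, and conclude $\|K_f\|_1\le\|h\|_1^2\|K_x\|_1$. Your write-up is in fact slightly more careful than the paper's, in that you invoke Tonelli (for the nonnegative integrand) before Fubini and spell out the translation-invariance step, and your closing remark on $K_{fx}$ mirrors exactly the paper's comment after eq.~\eqref{eq:cross_var}.
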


\begin{proof}
  This follows in the same vein as the standard proof of integrability of the convolution between two functions with a slight modification, since the definition of $K_f(t)$---eq.~(5) in the article---comprises the composition of two convolutions rather than just one. Therefore, using Fubini Thm and the triangle inequality (twice), we have 
  \begin{align*}
  \int_\R|K_f(t)|\dt& = \int_\R\left| \int_{\R} h(\tau')\int_{\R}h(\tau) K_x(\tau - (\tau'-(t)) \dtau\dtau'\right|\dt &&\text{[Fubini on eq.~(5))]}\\ 
      & \leq \int_\R \int_{\R} |h(\tau')|  \int_{\R} \left|h(\tau)\right| \left|K_x(\tau - (\tau'-(t)) \right|\dtau\dtau'\dt&&\text{[triangle ineq. twice]}\\
      & =  \int_{\R} |h(\tau')|  \int_{\R} \left|h(\tau)\right|  \int_\R  \left|K_x(\tau - (\tau'-(t)) \right|\dt \dtau\dtau' &&\text{[Fubini]}\\
      & =  ||h||_1||h||_1 ||K_x||_1 && \\ 
      & < \infty.&& \text{[$h,K_x\in L_1$]}
  \end{align*}
\end{proof}

\subsection{Calculations for Theorem 1}

\begin{align}
	  	\E{\hat x_w (\xi)|\y} &= \E{\fourier{x(t)w(t)}|\y}\\
	  	 &= \fourier{\E{x(t)|\y}w(t)} \nonumber \\
	  	&= \fourier{\left(m_x(t) + K_{xy}(t,\t)K_y^{-1}(\t)\y\right) w(t)} \nonumber \\
	  	&= \left(m_x(t) + \fourier{K_{xy}(t,\t)}K_y^{-1}(\t)\y\right) \star \hat w(\xi) \nonumber \\
	  	&= \left(\hat m_x(\xi) + \hat{K}_{xy}(\xi)e^{-j2\pi\t\xi}K_y^{-1}(\t)\y\right)\star \hat w(\xi) \nonumber \\
	  	&= \left(\hat m_x(\xi) + \hat{K}_{x}(\xi)\hat{h}(\xi)e^{-j2\pi\t\xi}K_y^{-1}(\t)\y\right)\star \hat w(\xi)\nonumber
	  \end{align}

	  	\begin{align}
	  	\V{\hat x_w (\xi)|\y} &=\V{\fourier{x(t)w(t)}|\y}\\
	  	 &=\E{\overline{\fourier{x(t)w(t) - \E{x(t)w(t)|\y}}} \fourier{x(t)w(t)  - \E{x(t)w(t)|\y} }| \y}\nonumber\\
	    \text{[lin. exp.]}	&= \E{\overline{\fourier{(x(t) - \E{x(t)|\y})w(t)}} \fourier{(x(t) - \E{x(t)|\y})w(t) }| \y}\nonumber\\
	  \text{[conv. thm]}	&= \E{\fourier{(x(-t) - \E{x(-t)|\y})w(-t) \star (x(t) - \E{x(t)|\y})w(t)}| \y}\nonumber\\
	  \text{[def. conv.]}	&= \E{\fourier{\int(x(-\tau) - \E{x(-\tau)|\y})w(-\tau)  (x(t-\tau) - \E{x(t-\tau)|\y})w(t-\tau)}| \y}\nonumber\\
	  \text{[lin. conv.]}	&= \fourier{\int w(-\tau) \E{(x(-\tau) - \E{x(-\tau)|\y}) (x(t-\tau) - \E{x(t-\tau)|\y})| \y} w(t-\tau)}\nonumber\\
	  \text{[def. cov.]}	&= \fourier{\int w(-\tau) \V{x(-\tau),x(t-\tau)|\y} w(t-\tau)}\nonumber\\
	  	\text{[def. cov]}	&= \fourier{ \int \left(K_{x} (t) - K_{xy} (-\tau, \t) K_{y}^{-1}K_{yx} (\t, t-\tau) \right)w(-\tau)  w(t-\tau) \dtau}\nonumber\\
	  		 &= \hat{K}_x(\xi)\star|\hat{w}(\xi)|^2 - \fourier{ \int w(-\tau)K_{xy} (-\tau, \t) K_y^{-1} K_{yx} (\t, t-\tau)  w(t-\tau) \dtau}\nonumber\\
	  		  &= \hat{K}_x(\xi)\star|\hat{w}(\xi)|^2 -  \overline{ \fourier{ w(t)K_{xy} (t, \t) } } K_y^{-1} \fourier{ K_{yx} (\t, t)  w(t) }\nonumber\\
	  		  &= \hat{K}_x(\xi)\star|\hat{w}(\xi)|^2 -  \overline { \hat w(\xi)   \star  \hat K_x(\xi)\hat h(\xi)e^{-j2\pi\xi\t}  }   K_y^{-1} \hat w(\xi)   \star  \hat K_x(\xi)\hat h(\xi)e^{-j2\pi\xi\t} \nonumber\\
	  		  &= \hat{K}_x(\xi)\star|\hat{w}(\xi)|^2  -  \| \hat w(\xi)   \star  \hat K_x(\xi)\hat h(\xi)e^{-j2\pi\xi\t}  \|_{ K_y^{-1}} \nonumber
	  \end{align}  

\subsection{Extended audio experiment}
For the audio de-reverberation experiment (see Fig.~\ref{fig:reverb} in the main body of the article), we also show the estimates of all methods considered in detail. First, Fig.~\ref{fig:audio_true} shows the true signals (and filter), while Fig.~\ref{fig:audio_GPDC} shows the estimate of the proposed GPDC (mean and 95\% error bars). Figs.~\ref{fig:audio_Wiener} and \ref{fig:audio_invFT} show the estimates provided by the Wiener and Inverse FT methods respectively, considered as benchmarks to GPDC in this experiment. We also clarify that, as these last two methods perform direct deconvolution in the frequency domain, they do not maintain the phase of the signal and thus they need to be aligned with the true signal---we have aligned them for visualisation purposes. 

Based on the estimates of the source $x$ provided by the Inverse FT, Wiener and proposed GPDC (Figs.~\ref{fig:audio_invFT}, \ref{fig:audio_Wiener} and \ref{fig:audio_GPDC} respectively), we can see how the worst reconstruction provided by the Inverse FT (which does not consider the fact that the source is stochastic). Then, the reconstruction provided by Wiener improves over the Inverse FT, since Wiener considers the dynamics of both the filter and the signal, however, it lacks a \emph{likelihood model}, meaning that it is not able to discriminate between the (noisy) observations $\y$ and the convolution $f$. Lastly, observe that the proposed GPDC performed better that the benchmarks due the fact that  it incorporates both the dynamic behaviour of the source (through $K_x$) and the noise in the observations $\y$.

\begin{figure}[t]
 \centering
\includegraphics[width=0.9\textwidth]{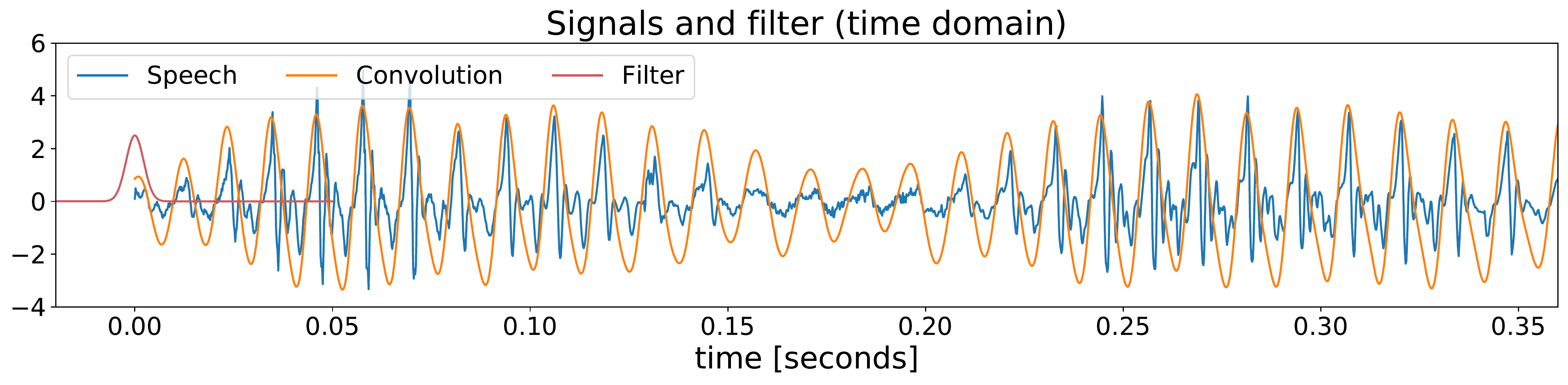}
\caption{True audio signals (source and convolution)  together with the convolution filter.}
\label{fig:audio_true}
\end{figure}
\begin{figure}[t]
\includegraphics[width=0.9\textwidth]{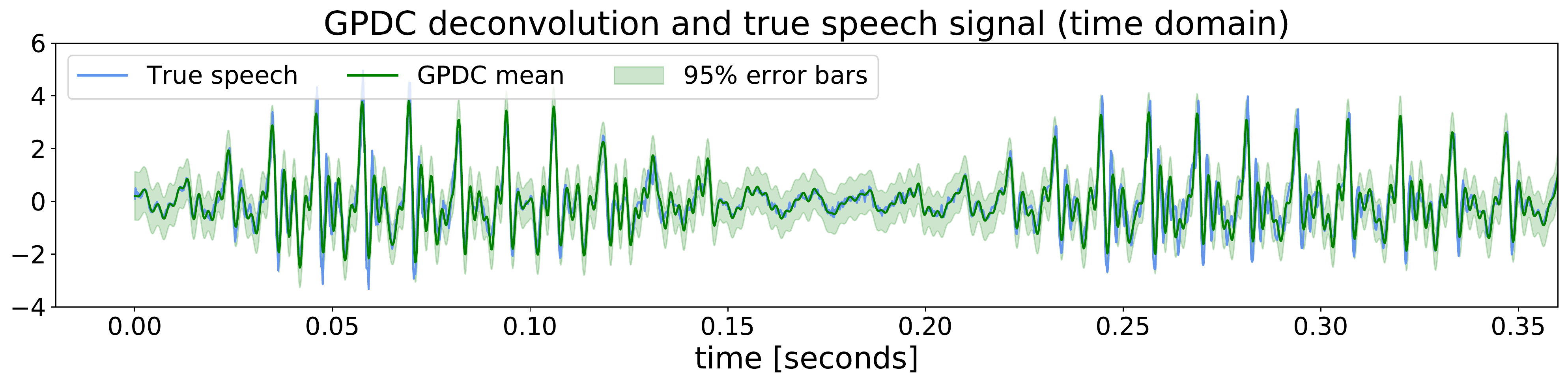}
\caption{Deconvolution of a speech signal: GPDC estimate alongside true source and  95\% error bars.}
\label{fig:audio_GPDC}
\end{figure}
\begin{figure}[t]
\includegraphics[width=0.9\textwidth]{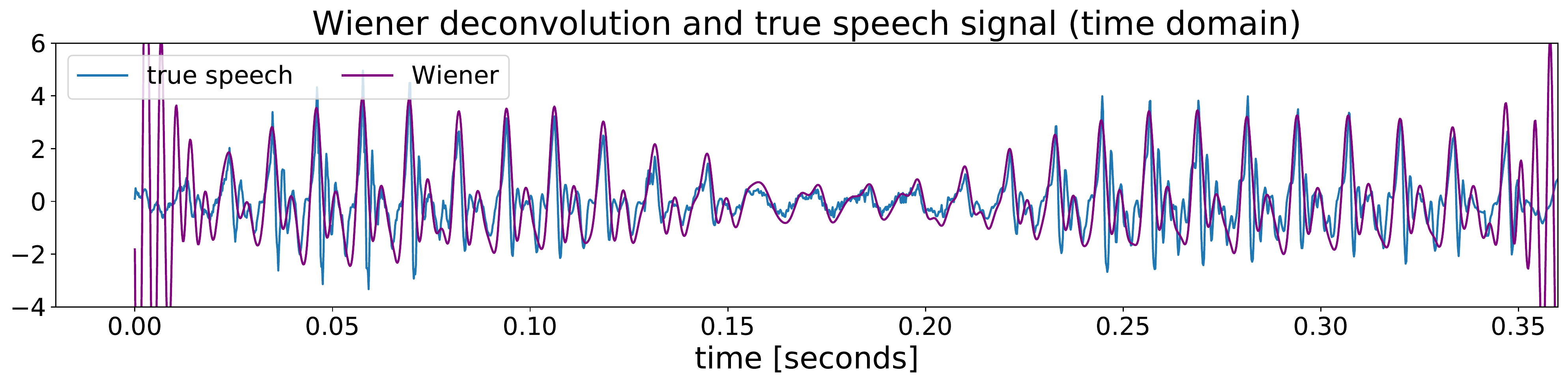}
\caption{Deconvolution of a speech signal: Wiener estimate alongside true source.}
\label{fig:audio_Wiener}
\end{figure}
\begin{figure}[t]
\includegraphics[width=0.9\textwidth]{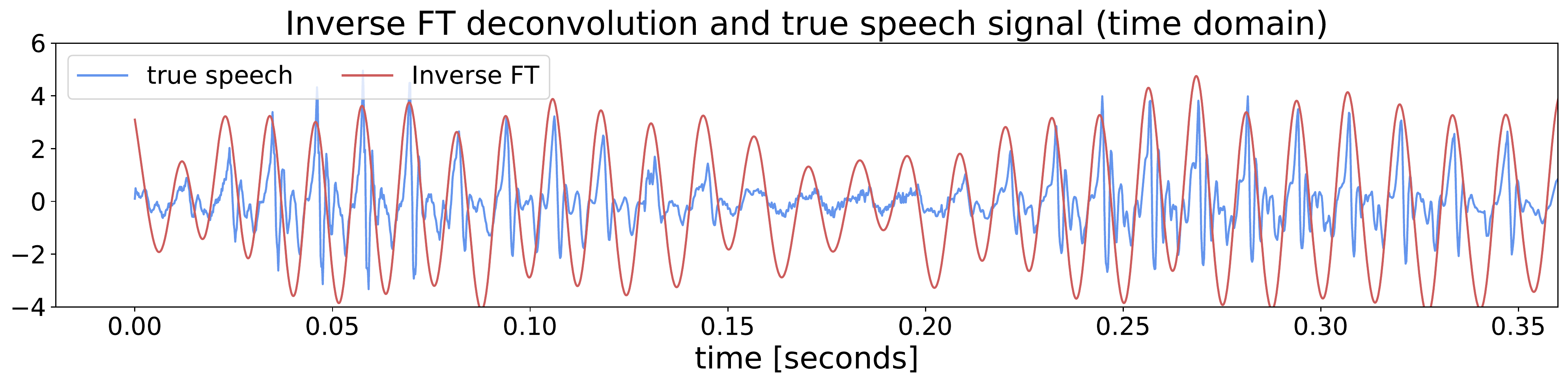}
\caption{Deconvolution of a speech signal: Inverse FT estimate alongside true source.}
\label{fig:audio_invFT}
\end{figure}

\subsection{Extended image experiment}

In this section, we include further simulations for the image deconvolution or \emph{superresolution} experiments on additional test images and filters. This experiment aimed to validate the proposed GPDC and the blind GPDC (where the filter $h$ is unknown and thus learnt from the images) against the Wiener deconvolution method. We considered the following $(32\times32)$ images from the CIFAR-10 Dataset: an airplane (Fig.~\ref{fig:CIFAR-PLANE}), an emu (Fig.~\ref{fig:CIFAR-EMU}), a bird (Fig.~\ref{fig:CIFAR-BIRD}), a horse (Fig.~\ref{fig:CIFAR-RIDER}) and a second airplane (Fig.~\ref{fig:CIFAR-SWISS}). For all experiments, we considered 5 different filters $h$ of size $(5 \times 5)$ pixels, these filters are shown in each figure from top to bottom colour-coded in grey scale  between 0 and 1, and  are given by:
\begin{enumerate}
  \item $h_0$: a constant filter,
  \item $h_1$: a filter with a large (i.e., 1) at the origin and close to zero around the origin,
  \item $h_2$: a radial filter that decays like a Gaussian RBF,
  \item $h_3$: a filter composed of random values  uniformly distributed between 0 and 1, and
  \item $h_4$: a diagonal filter.
\end{enumerate}
Lastly, for all images and filters, GPDC and blind-GPDC where implemented with a number of observations  ranging from 10\% to 90\% (for the performance plot), where the Figures exhibit the case for  30\% missing data.

To understand the performance of the proposed GPDC and its blind variant against Wiener, recall that \textbf{the Wiener benchmark was implemented on the complete and noiseless image}, whereas  the proposed model had access only to a missing-data and noisy version of the image. Let us observe that for the filters $h_1$ and $h_2$, although the Wiener outperforms GPDC and blind-GPDC, the proposed models present a monotonic improvement wrt to the number of observations and even the blind GPDC is visually able to recover the true image. For more complex filters ($h_0,h_2,h_4$) we can see how both GPDC  methods greatly outperform the  Wiener benchmark whenever more than a 25\% of the image is available. This behaviour is  consistently found for all images, thus  validating the robustness of the proposed GPDC methods while dealing with  different images.

In all the following figures, the results are shown in the following order from left to right: filter  (values between 0 and 1 colour-coded in grey scale), true  image  $x$, convolution $f$, observation $\y$ ($30\%$ missing pixels in red), GPDC estimate, blind-GPDC estimate, Wiener estimate and performance of GPDC versus amount of observed pixels (error bars for 20 trials). For each figure, each row is a realisation of the same experiment but with a different filter.

\begin{figure}[t]
  \centering
  \begin{subfigure}{\linewidth}
  \centering
\includegraphics[width=1\textwidth]{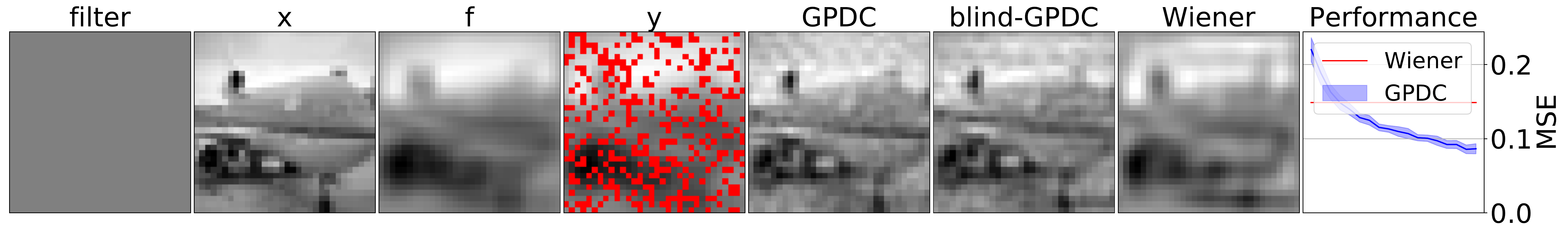}
  \end{subfigure}

  \begin{subfigure}{\linewidth}
  \centering
    \includegraphics[width=1\textwidth]{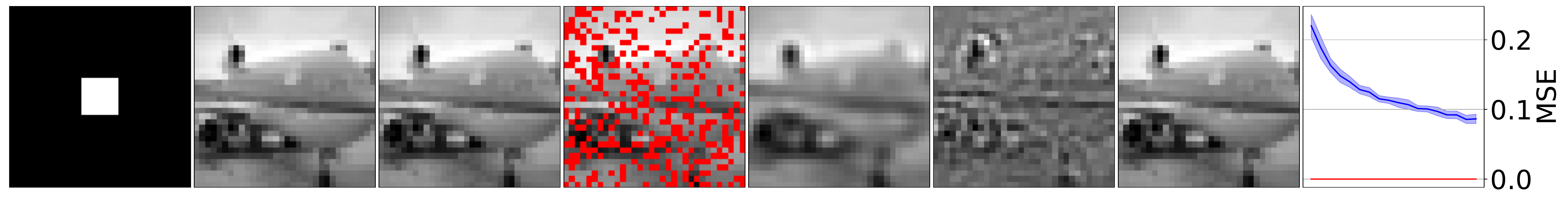}
  \end{subfigure}

  \begin{subfigure}{\linewidth}
  \centering
    \includegraphics[width=1\textwidth]{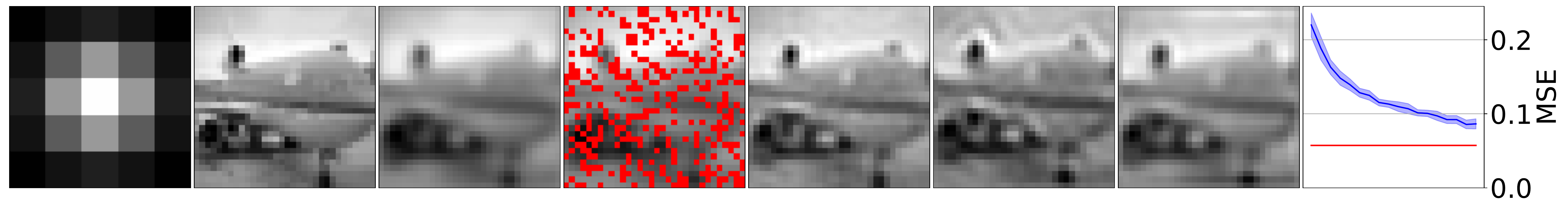}     
  \end{subfigure}

  \begin{subfigure}{\linewidth}
  \centering
    \includegraphics[width=1\textwidth]{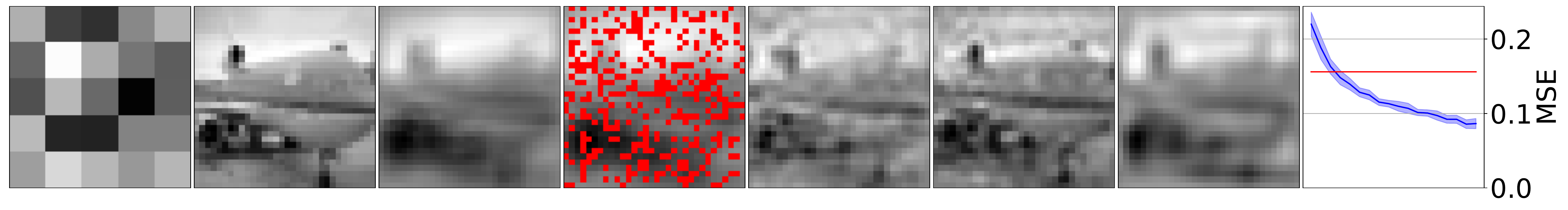}     
  \end{subfigure}

  \begin{subfigure}{\linewidth}
  \centering
    \includegraphics[width=1\textwidth]{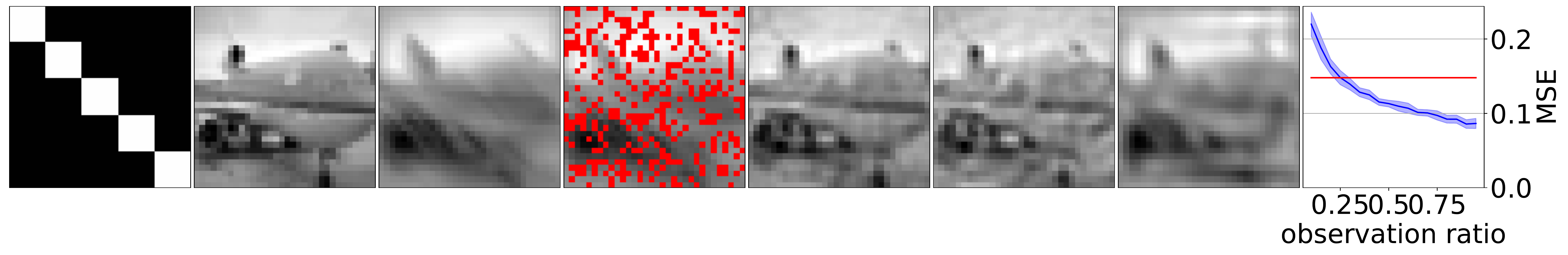}     
  \end{subfigure}
 \caption{Picture of a plane.}
 \label{fig:CIFAR-PLANE}

\end{figure}

\begin{figure}[t]
  \begin{subfigure}{\linewidth}
  \centering
\includegraphics[width=1\textwidth]{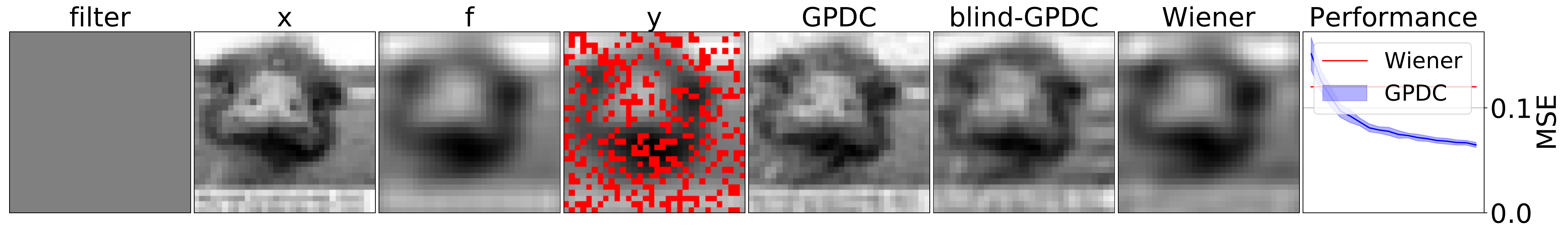}
  \end{subfigure}

  \begin{subfigure}{\linewidth}
  \centering
    \includegraphics[width=1\textwidth]{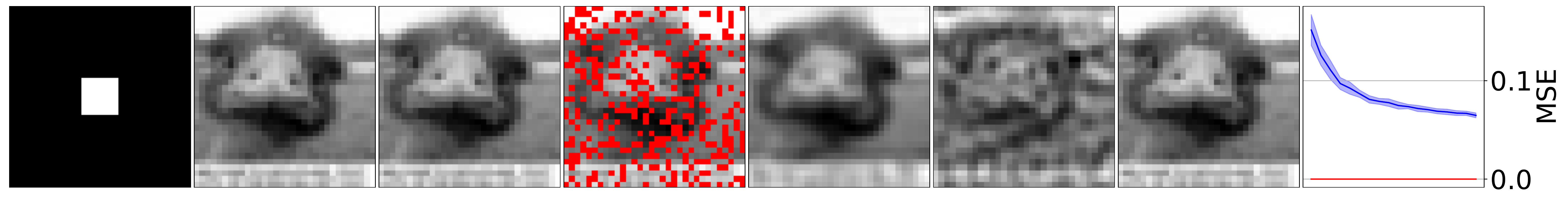}
  \end{subfigure}

  \begin{subfigure}{\linewidth}
  \centering
    \includegraphics[width=1\textwidth]{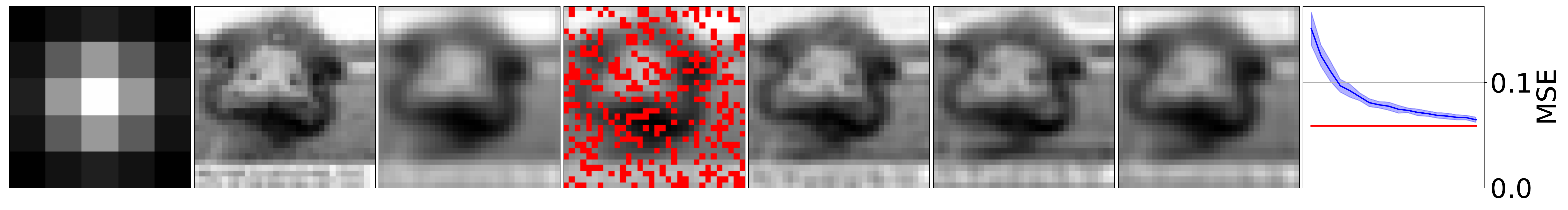}     
  \end{subfigure}

  \begin{subfigure}{\linewidth}
  \centering
    \includegraphics[width=1\textwidth]{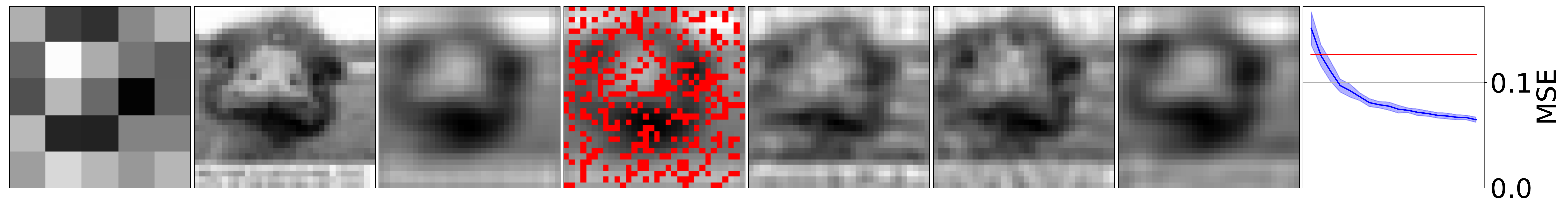}     
  \end{subfigure}

  \begin{subfigure}{\linewidth}
  \centering
    \includegraphics[width=1\textwidth]{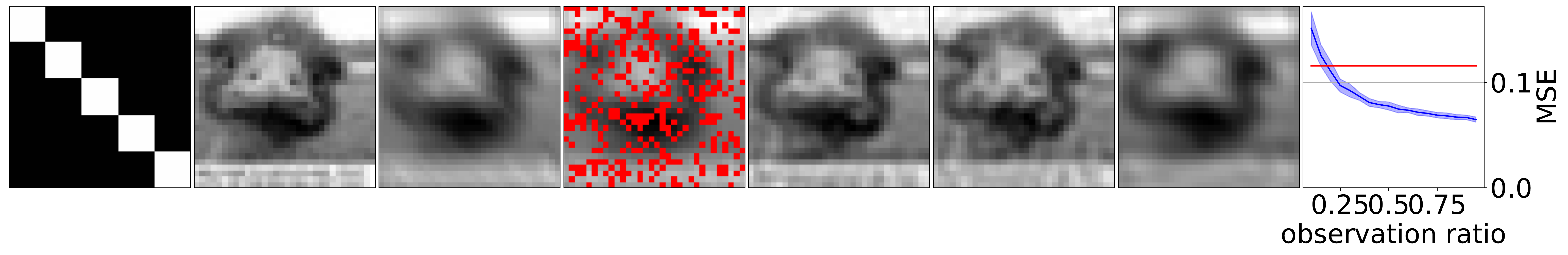}     
  \end{subfigure}
 \caption{Picture of an Emu.}
\label{fig:CIFAR-EMU}
\end{figure}

\begin{figure}[t]
  \begin{subfigure}{\linewidth}
  \centering
\includegraphics[width=1\textwidth]{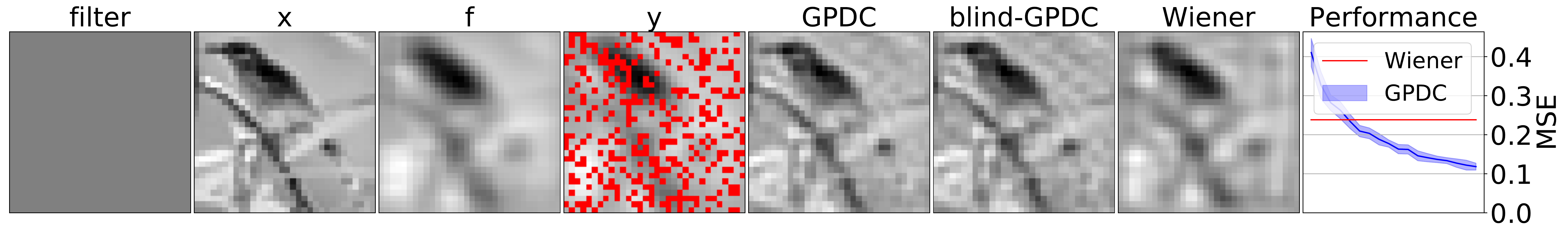}
  \end{subfigure}

  \begin{subfigure}{\linewidth}
  \centering
    \includegraphics[width=1\textwidth]{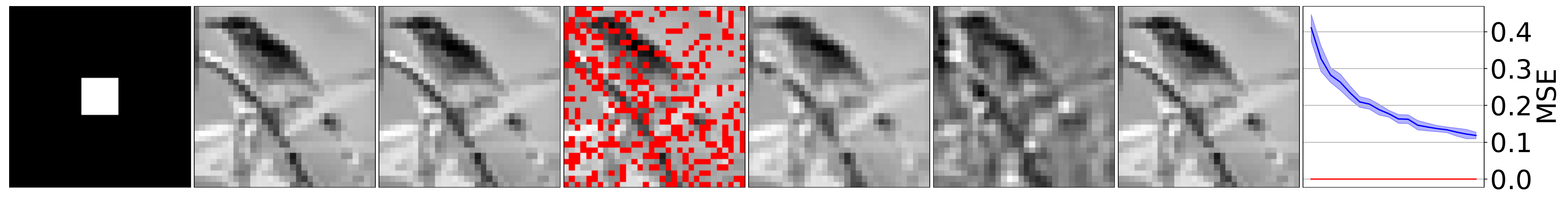}
  \end{subfigure}

  \begin{subfigure}{\linewidth}
  \centering
    \includegraphics[width=1\textwidth]{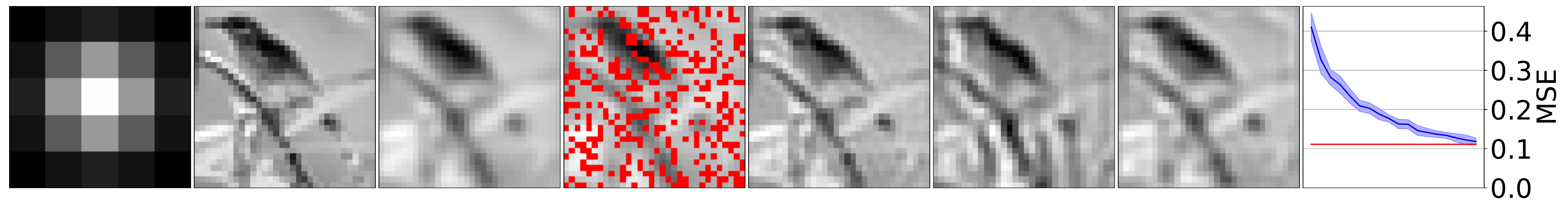}     
  \end{subfigure}

  \begin{subfigure}{\linewidth}
  \centering
    \includegraphics[width=1\textwidth]{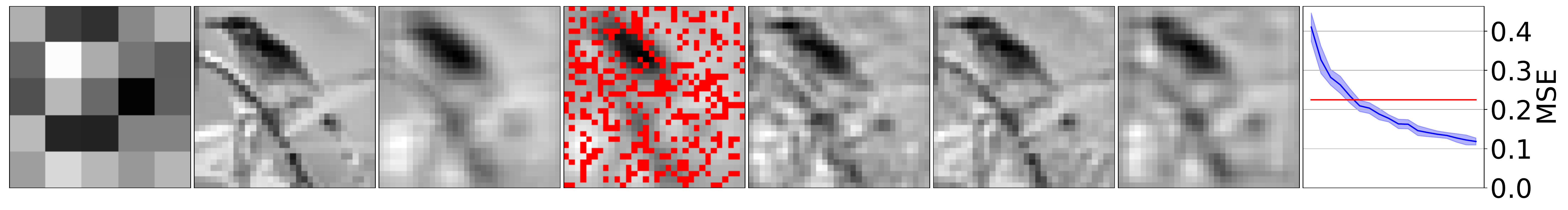}     
  \end{subfigure}

  \begin{subfigure}{\linewidth}
  \centering
    \includegraphics[width=1\textwidth]{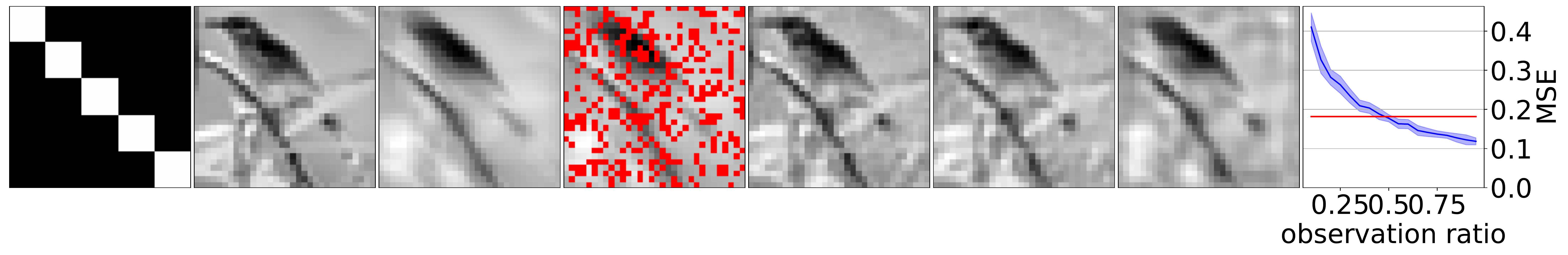}     
  \end{subfigure}
 \caption{Picture of a bird.}
\label{fig:CIFAR-BIRD}

\end{figure}

\begin{figure}[t]
  \begin{subfigure}{\linewidth}
  \centering
\includegraphics[width=1\textwidth]{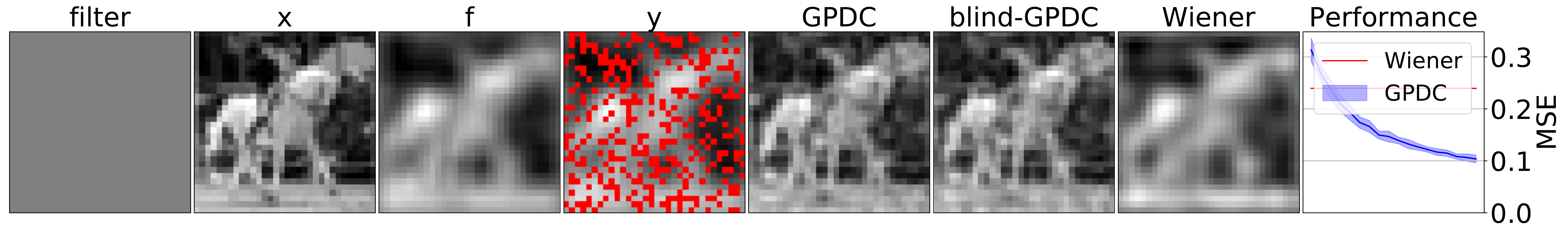}
  \end{subfigure}

  \begin{subfigure}{\linewidth}
  \centering
    \includegraphics[width=1\textwidth]{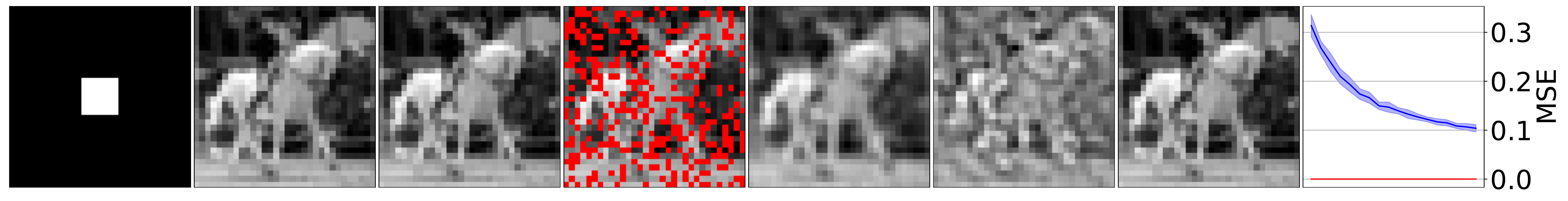}
  \end{subfigure}

  \begin{subfigure}{\linewidth}
  \centering
    \includegraphics[width=1\textwidth]{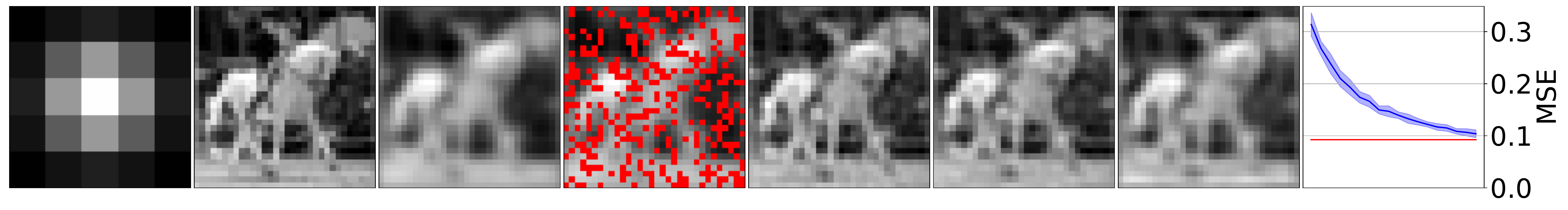}     
  \end{subfigure}

  \begin{subfigure}{\linewidth}
  \centering
    \includegraphics[width=1\textwidth]{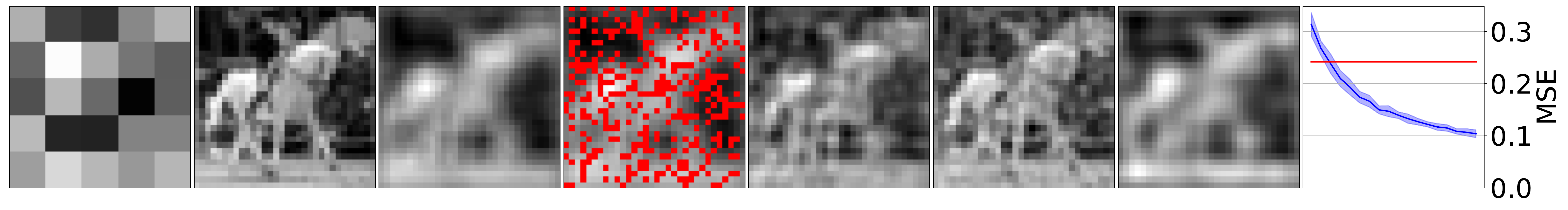}     
  \end{subfigure}

  \begin{subfigure}{\linewidth}
  \centering
    \includegraphics[width=1\textwidth]{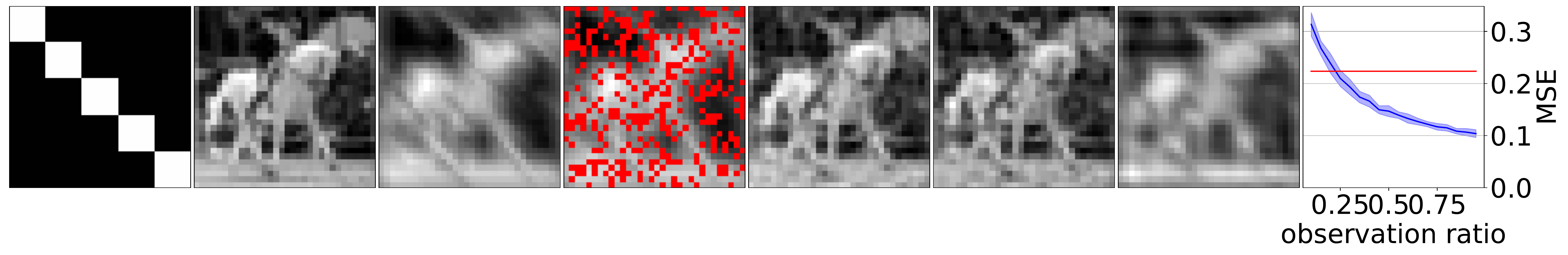}     
  \end{subfigure}
 \caption{Picture of a horse.}
\label{fig:CIFAR-RIDER}

\end{figure}

\begin{figure}[t]
  \begin{subfigure}{\linewidth}
  \centering
\includegraphics[width=1\textwidth]{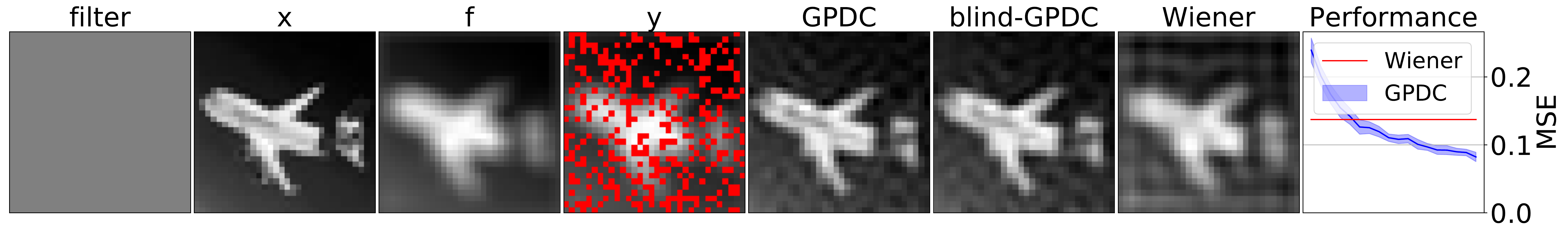}
  \end{subfigure}

  \begin{subfigure}{\linewidth}
  \centering
    \includegraphics[width=1\textwidth]{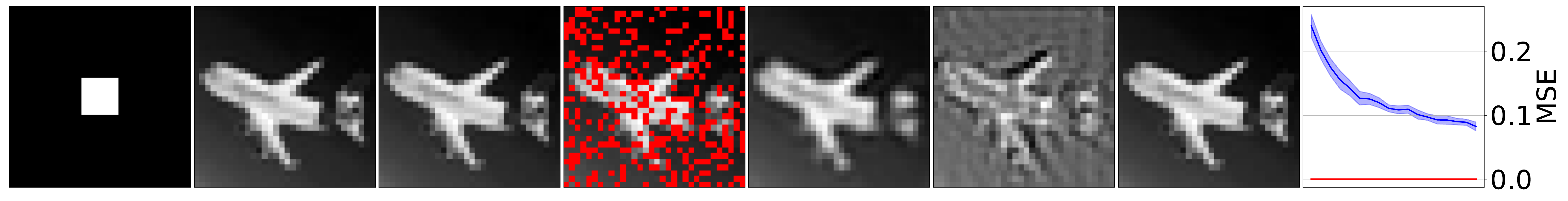}
  \end{subfigure}

  \begin{subfigure}{\linewidth}
  \centering
    \includegraphics[width=1\textwidth]{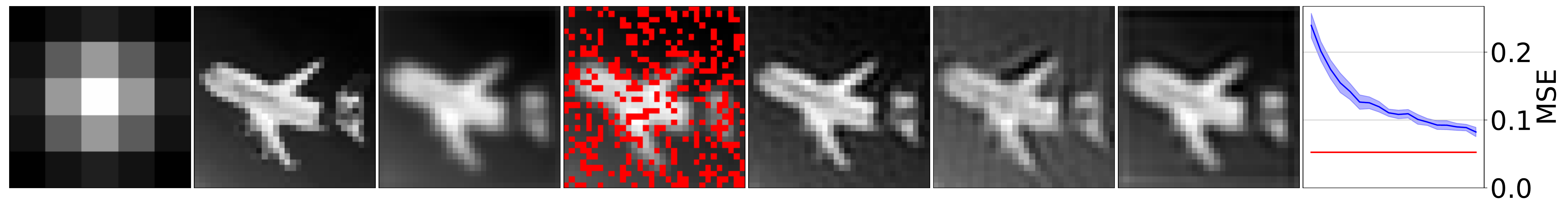}     
  \end{subfigure}

  \begin{subfigure}{\linewidth}
  \centering
    \includegraphics[width=1\textwidth]{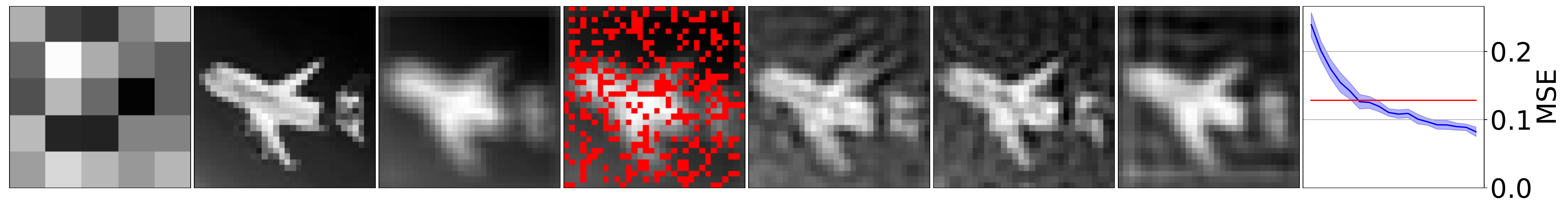}     
  \end{subfigure}

  \begin{subfigure}{\linewidth}
  \centering
    \includegraphics[width=1\textwidth]{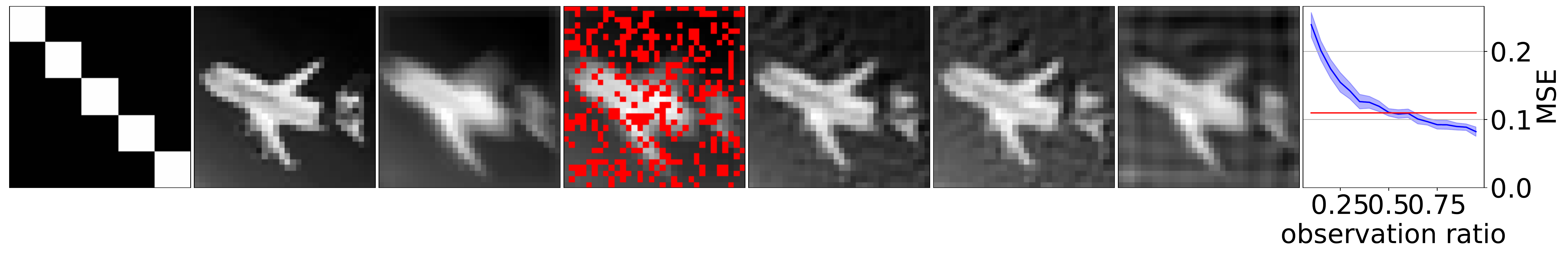}     
  \end{subfigure}
 \caption{Second picture of a plane.}
\label{fig:CIFAR-SWISS}

\end{figure}

\end{document}


\onecolumn

\section*{Extended proof of Lemma 1}

\begin{lemma-non} 
  If the convolution filter $h$ and the 
  covariance $K_x$ are both integrable, then $K_f(t)$ is integrable.
\end{lemma-non}

\begin{proof}
  This follows in the same vein as the standard proof of integrability of the convolution between two functions with a slight modification, since the definition of $K_f(t)$---eq.~(5) in the article---comprises the composition of two convolutions rather than just one. Therefore, using Fubini Thm and the triangle inequality (twice), we have 
  \begin{align*}
  \int_\R|K_f(t)|\dt& = \int_\R\left| \int_{\R} h(\tau')\int_{\R}h(\tau) K_x(\tau - (\tau'-(t)) \dtau\dtau'\right|\dt &&\text{[Fubini on eq.~(5))]}\\ 
      & \leq \int_\R \int_{\R} |h(\tau')|  \int_{\R} \left|h(\tau)\right| \left|K_x(\tau - (\tau'-(t)) \right|\dtau\dtau'\dt&&\text{[triangle ineq. twice]}\\
      & =  \int_{\R} |h(\tau')|  \int_{\R} \left|h(\tau)\right|  \int_\R  \left|K_x(\tau - (\tau'-(t)) \right|\dt \dtau\dtau' &&\text{[Fubini]}\\
      & =  ||h||_1||h||_1 ||K_x||_1 && \\ 
      & < \infty.&& \text{[$h,K_x\in L_1$]}
  \end{align*}
\end{proof}

\section*{Calculations for Theorem 1}
\begin{align}
	  	\E{\hat x_w (\xi)|\y} &= \E{\fourier{x(t)w(t)}|\y}\\
	  	&= \fourier{\E{x(t)|\y}w(t)} \nonumber \\
	  	&= \fourier{\left(m_x(t) + K_{xy}(t,\t)K_y^{-1}(\t)\y\right) w(t)} \nonumber \\
	  	&= \left(m_x(t) + \fourier{K_{xy}(t,\t)}K_y^{-1}(\t)\y\right) \star \hat w(\xi) \nonumber \\
	  	&= \left(\hat m_x(\xi) + \hat{K}_{xy}(\xi)e^{-j2\pi\t\xi}K_y^{-1}(\t)\y\right)\star \hat w(\xi) \nonumber \\
	  	&= \left(\hat m_x(\xi) + \hat{K}_{x}(\xi)\hat{h}(\xi)e^{-j2\pi\t\xi}K_y^{-1}(\t)\y\right)\star \hat w(\xi)\nonumber
	  \end{align}

	  	\begin{align}
	  	\V{\hat x_w (\xi)|\y} &=\V{\fourier{x(t)w(t)}|\y}\\
	  	&=\E{\overline{\fourier{x(t)w(t) - \E{x(t)w(t)|\y}}} \fourier{x(t)w(t)  - \E{x(t)w(t)|\y} }| \y}\nonumber\\
	    \text{[lin. exp.]}	&= \E{\overline{\fourier{(x(t) - \E{x(t)|\y})w(t)}} \fourier{(x(t) - \E{x(t)|\y})w(t) }| \y}\nonumber\\
	  \text{[conv. thm]}	&= \E{\fourier{(x(-t) - \E{x(-t)|\y})w(-t) \star (x(t) - \E{x(t)|\y})w(t)}| \y}\nonumber\\
	  \text{[def. conv.]}	&= \E{\fourier{\int(x(-\tau) - \E{x(-\tau)|\y})w(-\tau)  (x(t-\tau) - \E{x(t-\tau)|\y})w(t-\tau)}| \y}\nonumber\\
	  \text{[lin. conv.]}	&= \fourier{\int w(-\tau) \E{(x(-\tau) - \E{x(-\tau)|\y}) (x(t-\tau) - \E{x(t-\tau)|\y})| \y} w(t-\tau)}\nonumber\\
	  \text{[def. cov.]}	&= \fourier{\int w(-\tau) \V{x(-\tau),x(t-\tau)|\y} w(t-\tau)}\nonumber\\
	  	\text{[def. cov]}	&= \fourier{ \int \left(K_{x} (t) - K_{xy} (-\tau, \t) K_{y}^{-1}K_{yx} (\t, t-\tau) \right)w(-\tau)  w(t-\tau) \dtau}\nonumber\\
	  		 &= \hat{K}_x(\xi)\star|\hat{w}(\xi)|^2 - \fourier{ \int w(-\tau)K_{xy} (-\tau, \t) K_y^{-1} K_{yx} (\t, t-\tau)  w(t-\tau) \dtau}\nonumber\\
	  		  &= \hat{K}_x(\xi)\star|\hat{w}(\xi)|^2 -  \overline{ \fourier{ w(t)K_{xy} (t, \t) } } K_y^{-1} \fourier{ K_{yx} (\t, t)  w(t) }\nonumber\\
	  		  &= \hat{K}_x(\xi)\star|\hat{w}(\xi)|^2 -  \overline { \hat w(\xi)   \star  \hat K_x(\xi)\hat h(\xi)e^{-j2\pi\xi\t}  }   K_y^{-1} \hat w(\xi)   \star  \hat K_x(\xi)\hat h(\xi)e^{-j2\pi\xi\t} \nonumber\\
	  		  &= \hat{K}_x(\xi)\star|\hat{w}(\xi)|^2  -  \| \hat w(\xi)   \star  \hat K_x(\xi)\hat h(\xi)e^{-j2\pi\xi\t}  \|_{ K_y^{-1}} \nonumber
	  \end{align}


\onecolumn

\section*{Extended proof of Lemma 1}

\begin{lemma-non} 
  If the convolution filter $h$ and the 
  covariance $K_x$ are both integrable, then $K_f(t)$ is integrable.
\end{lemma-non}

\begin{proof}
  This follows in the same vein as the standard proof of integrability of the convolution between two functions with a slight modification, since the definition of $K_f(t)$---eq.~(5) in the article---comprises the composition of two convolutions rather than just one. Therefore, using Fubini Thm and the triangle inequality (twice), we have 
  \begin{align*}
  \int_\R|K_f(t)|\dt& = \int_\R\left| \int_{\R} h(\tau')\int_{\R}h(\tau) K_x(\tau - (\tau'-(t)) \dtau\dtau'\right|\dt &&\text{[Fubini on eq.~(5))]}\\ 
      & \leq \int_\R \int_{\R} |h(\tau')|  \int_{\R} \left|h(\tau)\right| \left|K_x(\tau - (\tau'-(t)) \right|\dtau\dtau'\dt&&\text{[triangle ineq. twice]}\\
      & =  \int_{\R} |h(\tau')|  \int_{\R} \left|h(\tau)\right|  \int_\R  \left|K_x(\tau - (\tau'-(t)) \right|\dt \dtau\dtau' &&\text{[Fubini]}\\
      & =  ||h||_1||h||_1 ||K_x||_1 && \\ 
      & < \infty.&& \text{[$h,K_x\in L_1$]}
  \end{align*}
\end{proof}

\section*{Calculations for Theorem 1}
\begin{align}
	  	\E{\hat x_w (\xi)|\y} &= \E{\fourier{x(t)w(t)}|\y}\\
	  	&= \fourier{\E{x(t)|\y}w(t)} \nonumber \\
	  	&= \fourier{\left(m_x(t) + K_{xy}(t,\t)K_y^{-1}(\t)\y\right) w(t)} \nonumber \\
	  	&= \left(m_x(t) + \fourier{K_{xy}(t,\t)}K_y^{-1}(\t)\y\right) \star \hat w(\xi) \nonumber \\
	  	&= \left(\hat m_x(\xi) + \hat{K}_{xy}(\xi)e^{-j2\pi\t\xi}K_y^{-1}(\t)\y\right)\star \hat w(\xi) \nonumber \\
	  	&= \left(\hat m_x(\xi) + \hat{K}_{x}(\xi)\hat{h}(\xi)e^{-j2\pi\t\xi}K_y^{-1}(\t)\y\right)\star \hat w(\xi)\nonumber
	  \end{align}

	  	\begin{align}
	  	\V{\hat x_w (\xi)|\y} &=\V{\fourier{x(t)w(t)}|\y}\\
	  	&=\E{\overline{\fourier{x(t)w(t) - \E{x(t)w(t)|\y}}} \fourier{x(t)w(t)  - \E{x(t)w(t)|\y} }| \y}\nonumber\\
	    \text{[lin. exp.]}	&= \E{\overline{\fourier{(x(t) - \E{x(t)|\y})w(t)}} \fourier{(x(t) - \E{x(t)|\y})w(t) }| \y}\nonumber\\
	  \text{[conv. thm]}	&= \E{\fourier{(x(-t) - \E{x(-t)|\y})w(-t) \star (x(t) - \E{x(t)|\y})w(t)}| \y}\nonumber\\
	  \text{[def. conv.]}	&= \E{\fourier{\int(x(-\tau) - \E{x(-\tau)|\y})w(-\tau)  (x(t-\tau) - \E{x(t-\tau)|\y})w(t-\tau)}| \y}\nonumber\\
	  \text{[lin. conv.]}	&= \fourier{\int w(-\tau) \E{(x(-\tau) - \E{x(-\tau)|\y}) (x(t-\tau) - \E{x(t-\tau)|\y})| \y} w(t-\tau)}\nonumber\\
	  \text{[def. cov.]}	&= \fourier{\int w(-\tau) \V{x(-\tau),x(t-\tau)|\y} w(t-\tau)}\nonumber\\
	  	\text{[def. cov]}	&= \fourier{ \int \left(K_{x} (t) - K_{xy} (-\tau, \t) K_{y}^{-1}K_{yx} (\t, t-\tau) \right)w(-\tau)  w(t-\tau) \dtau}\nonumber\\
	  		 &= \hat{K}_x(\xi)\star|\hat{w}(\xi)|^2 - \fourier{ \int w(-\tau)K_{xy} (-\tau, \t) K_y^{-1} K_{yx} (\t, t-\tau)  w(t-\tau) \dtau}\nonumber\\
	  		  &= \hat{K}_x(\xi)\star|\hat{w}(\xi)|^2 -  \overline{ \fourier{ w(t)K_{xy} (t, \t) } } K_y^{-1} \fourier{ K_{yx} (\t, t)  w(t) }\nonumber\\
	  		  &= \hat{K}_x(\xi)\star|\hat{w}(\xi)|^2 -  \overline { \hat w(\xi)   \star  \hat K_x(\xi)\hat h(\xi)e^{-j2\pi\xi\t}  }   K_y^{-1} \hat w(\xi)   \star  \hat K_x(\xi)\hat h(\xi)e^{-j2\pi\xi\t} \nonumber\\
	  		  &= \hat{K}_x(\xi)\star|\hat{w}(\xi)|^2  -  \| \hat w(\xi)   \star  \hat K_x(\xi)\hat h(\xi)e^{-j2\pi\xi\t}  \|_{ K_y^{-1}} \nonumber
	  \end{align}